\DeclareMathAlphabet\mathbb{U}{msb}{m}{n}
\def\Rset{\mathbb{R}}
\def\Nset{\mathbb{N}}
\let\P\undefined
\DeclareMathOperator*{\P}{\mathbb{P}}
\DeclarePairedDelimiter{\ceil}{\lceil}{\rceil}
\newcommand{\cF}{\mathcal{F}}
\newcommand{\cT}{\mathcal{T}}
\newcommand{\sfL}{{\mathsf L}}
\newcommand{\sfM}{{\mathsf M}}
\newcommand{\sfS}{{\mathsf S}}
\DeclareMathOperator{\SL}{\mathsf{F}}
\DeclareMathOperator{\sfoE}{\mathsf{E}}
\newcommand{\ignore}[1]{}
\newcommand{\1}{\mathds{1}}
\definecolor{darkred}{RGB}{140,0,0}
\definecolor{darkgreen}{RGB}{0,140,0}
\definecolor{lightred}{rgb}{1,0.78,0.78}
\newcommand{\sX}{{\mathscr X}}
\newcommand{\sY}{{\mathscr Y}}
\theoremstyle{plain}
\theoremstyle{definition}
\theoremstyle{remark}
\newcommand{\sD}{{\mathscr D}}
\newcommand{\sN}{{\mathscr N}}
\newcommand{\sT}{{\mathscr T}}
\definecolor{lightred}{rgb}{1,0.78,0.78}
\title{Language Models with Conformal Factuality Guarantees}
\author{Christopher Mohri}
\author{Tatsunori Hashimoto}
\affil{Department of Computer Science, Stanford University}
\date{}
\begin{document}

\maketitle

\begin{abstract}
\noindent Guaranteeing the correctness and factuality of language model (LM) outputs is a major open problem. In this work, we propose \emph{conformal factuality}, a framework that can ensure high probability correctness guarantees for LMs by connecting language modeling and conformal prediction.
We observe that the correctness of an LM output is equivalent to an uncertainty quantification problem, where the uncertainty sets are defined as the entailment set of an LM's output.
%Each LM output defines an associated uncertainty set of more specific statements that entail the output, and containing any correct response in this uncertainty set is equivalent to the correctness of the LM output. 
Using this connection, we show that conformal prediction in language models corresponds to a back-off algorithm that provides high probability correctness guarantees by progressively making LM outputs less specific (and expanding the associated uncertainty sets). This approach applies to any black-box LM and requires very few human-annotated samples. Evaluations of our approach on closed book QA (FActScore, NaturalQuestions) and reasoning tasks (MATH) show that our approach can provide 80-90\% correctness guarantees while retaining the majority of the LM's original output.
\end{abstract}

\section{Introduction}

Large language models (LLMs) have demonstrated exceptional progress in recent years and are increasingly being adopted in various domains such as search engines and chatbots \citep{wei2022emergent, raffel2023exploring, bubeck2023sparks, ling2023domain}. However, their outputs cannot be fully trusted due to their tendency to generate hallucinations and non-factual content \citep{maynez2020faithfulness, huang2023survey, Ji_2023}. This has made the factuality and correctness of LLMs an important and active area of research, with several promising approaches that ground LLMs with knowledge sources \citep{wang2023survey, lee2023factuality, semnani-etal-2023-wikichat, lewis2021retrievalaugmented, du2023improving, he2022rethinking} or perform abstention and deferral \citep{mohri2023learning, mao2023twostage, yang2023alignment, cheng2024ai}.

\begin{figure}[t!]
    \centering
    \includegraphics[scale=0.22]{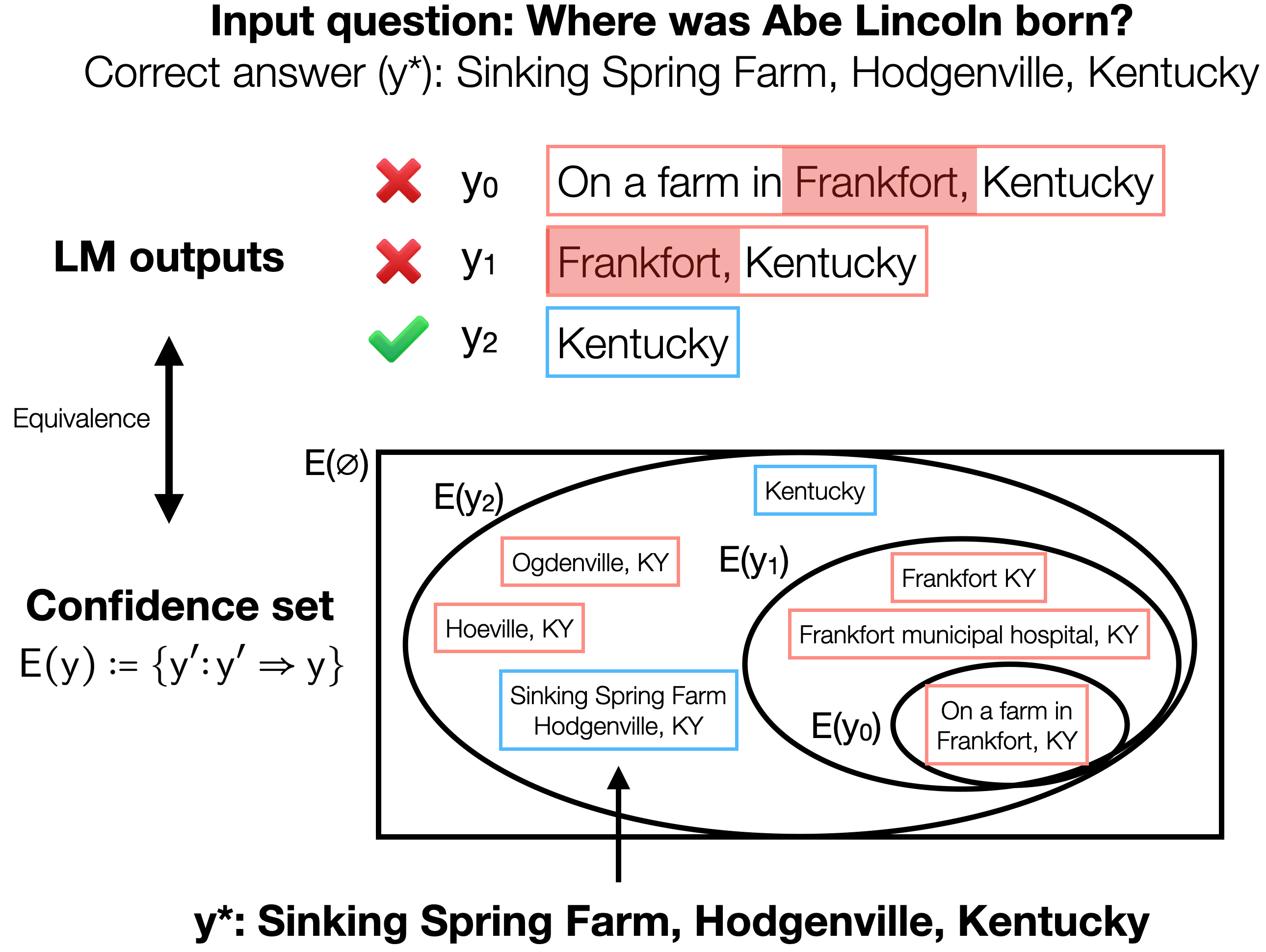}
    \caption{Conformal factuality uses conformal prediction to ensure the correctness of LM outputs. Each potential LM output sequence (top) is associated with an uncertainty set (bottom) that contains every `more specific' statement that entails it. Conformal prediction provides probabilistic guarantees that these uncertainty sets contain a correct answer (blue), which in turn guarantees the correctness of the associated output.}
    \label{fig:conf-equiv}
\end{figure}

While the factuality and correctness of language models are improving, precise guarantees are still needed. In many domains such as health \citep{tang-etal-2023-aligning, Thirunavukarasu2023, li2023chatdoctor}, law \citep{huang2023lawyer, curran2023hallucination}, or robotics \citep{zeng2023large, yang2023plug}, safely deploying a language model requires outputs to be correct with at least some known, user-specified probability. However, the complexity and opacity of LLMs make it challenging to provide precise performance guarantees.

To enable such high-probability correctness guarantees for black-box LLMs, we take inspiration from conformal prediction \citep{shafer2008tutorial, angelopoulos2022gentle}, a framework that performs uncertainty quantification 
on black-box machine learning systems without strong distributional or modeling assumptions. For any input, this framework produces conformal sets that are guaranteed to have marginally valid \emph{coverage}, meaning that on average they contain a correct output with any user-specified probability. 

While highly successful in regression and classification tasks \citep{balasubramanian2014conformal}, conformal prediction has had limited success in LLMs for two reasons: the need to score the entire output space makes it intractable, and the resulting confidence sets are so large that they are unusable.
Although a few approaches have applied conformal prediction in LLMs for multiple-choice settings \citep{kumar2023conformal, ren2023robots} or approximations \citep{quach2023conformal}, no existing method can provide exact, conformal guarantees on the open-ended outputs of language models. In our work, we propose a new natural correspondence between conformal sets and an LLM's outputs -- this not only resolves the major challenges above, but it will also directly lead to useful LLM-based systems that have correctness guarantees.

The key insight of our work is that each possible LM output defines an associated uncertainty set, where this set is defined as the set of statements that entail the LM's output. 
Under this definition, there is a direct correspondence between \emph{correctness} and \emph{coverage}, since containing a correct response in the uncertainty set implies that the associated LM's output must also be correct by entailment~(\Cref{fig:conf-equiv}). 
Defining the conformal sets \emph{implicitly} via entailment relations makes conformal prediction for language models practical and useful, as we never instantiate uncountably large uncertainty sets and we obtain meaningful and interpretable guarantees directly on model outputs.

Using the correspondence between LM outputs and their associated uncertainty sets, we show how conformal prediction defines a back-off algorithm for ensuring the correctness of LM outputs. This algorithm provides a high-probability correctness guarantee on outputs by producing a chain of output sequences that are increasingly less specific claims and then selects a level of specificity that is correct with high probability, using standard techniques from conformal prediction \citep{Gupta_2022}. 

While our approach provides guarantees for \emph{any} black box LM output, we demonstrate the practical utility of our approach by 
providing correctness guarantees on GPT-4~\citep{openai2023gpt4} outputs. Our method works by taking the outputs of GPT-4 and repeatedly removing the least certain sub-claims from the output using a GPT-4 prompt. 
Across closed-book QA and reasoning tasks, we show that conformal factuality enables us to attain any target correctness, and results in usable systems that simultaneously have correctness guarantees far higher than the correctness of the base model (30\% $\rightarrow$ 80\% on FactScore, 78\% $\rightarrow$ 93\% on NaturalQuestions, and 75\% $\rightarrow$ 95\% on MATH) while also retaining the majority of the sub-claims in the output.

\ignore{\citep{bates2021distributionfree}, we do not require a representation of this hierarchy through an explicit tree, and our set-valued predictors do not rely on computing scores of leaf nodes. }

We summarize our main contributions below.
\begin{itemize}
    \item We develop a natural correspondence between conformal prediction and LMs using entailment to define uncertainty sets associated with LM outputs.
    \item We provide an algorithmic instantiation of conformal factuality by breaking down LM outputs into sub-claims, scoring them, and removing claims according to their uncertainty.
    \item We demonstrate that our conformal factuality instantiated with GPT-4 can provide high-probability correctness guarantee on closed book QA and reasoning tasks while retaining most of the sub-claims in the outputs.\footnote{We release our code at \url{https://github.com/tatsu-lab/conformal-factual-lm}.}
\end{itemize}

The rest of this paper is organized as follows. We discuss related works in detail in Section~\ref{sec:related} followed by the desired guarantee and present our algorithm in Section~\ref{sec:prelim}.  In Section~\ref{sec:theoretical_analysis}, we present a theoretical analysis of our algorithm. Section~\ref{sec:implementation} describes our implementation, and Section~\ref{sec:experiments} shows the efficacy of our approach on both QA and reasoning tasks. 

\section{Related Work}\label{sec:related}
Our work relates to a growing literature on factuality for LLMs, conformal prediction, and conformal prediction for language generation.

\paragraph*{LM factuality. } The factuality of language models is a major concern and a topic of significant research interest. We refer our reader to the several surveys covering research on reducing, detecting, and evaluating hallucinations for a broader view~\citep{Ji_2023,huang2023survey,wang2023survey} and mention a few relevant works below. 

One class of works aims to improve LLMs' access to knowledge, via mechanisms such as retrieval augmentation~\citep{lewis2021retrievalaugmented,Das19,karpukhin2020dense} while another seeks to suppress LLMs' tendency to fabricate facts by adjusting the training method~\citep{Kang20}, ensembling~\citep{Kuhn2023SemanticUL}, or modifying the decoder~\citep{Shi2023TrustingYE,yang2023alignment}. Our work falls in the latter group, and shares some similarities to methods such as semantic uncertainty in that it is a decoding-time method, but has different goals in that we aim for precise, probabilistic guarantees of correctness. 

Our work also relates to a line of work on uncertainty quantification for LLMs, as we must be able to identify and remove unreliable sub-claims from an LLM's output. 
The implementation of conformal factuality in this work uses a method closely related to SelfCheckGPT~\citep{manakul2023selfcheckgpt} as well as prompting the LM for uncertainty estimates~\citep{tian2023just}. These works are complementary to ours, as our contribution is a meta-algorithm that uses these uncertainty estimators to return LM outputs that have conformal factuality guarantees. We expect future developments in this line of work to improve the performance of our methods as our algorithms work with any uncertainty quantification method.
% SelfcheckGPT and other abstention methods - closest to us

\paragraph*{Conformal prediction. }
% MENTION the Berkeley hierarchical paper - closest to us
Conformal prediction is a statistical technique for constructing confidence sets with precise, marginal coverage guarantees without strong distributional or modeling assumptions \citep{Shafer08,angelopoulos2022gentle,balasubramanian2014conformal,barber2023conformal}. This approach has been successful in providing confidence sets for black-box models such as deep neural networks \citep{einbinder2022training,balasubramanian2014conformal} but its application to language models has been limited.

In language models, there are three major families of applications of conformal prediction to language models: token-based approaches construct confidence sets on \emph{individual tokens}~\citep{ravfogel2023conformal,ulmer2024nonexchangeable}. This constrains the prediction space and makes it possible to apply standard conformal prediction techniques, but coverage guarantees over tokens cannot be converted into correctness guarantees for sequences. Multiple-choice reduction approaches reduce the prediction space of the LM in constrained domains like question answering. These approaches provide coverage guarantees over the output (e.g. the confidence sets contain the answer with probability $1-\alpha$) but can only be applied in highly constrained QA-style domains \citep{kumar2023conformal}. Finally, recent work has attempted to instantiate conformal uncertainty sets directly on the space of sequences~\citep{quach2023conformal} but the space of all sequences is intractably large, and this necessitates approximations to the true confidence set. Most importantly, all of these approaches return \emph{sets} of tokens and sequences, which are difficult to interpret and act upon. In contrast, our conformal factuality approach returns a \emph{single natural language sequence} (which implicitly represents a confidence set) and thus can be used directly to improve the output of LMs.

Finally, closest to our work is \citet{angelopoulos2023conformal}, which gives a conformal prediction algorithm that can be applied to classification problems with hierarchical labels. In that setting, their approach can return an intermediate node in the tree which implicitly represents a confidence set consisting of all the leaves of this subtree. While our work is similar in that we implicitly represent confidence sets for conformal prediction, our work differs in the construction of the set (via entailments), representation of the hierarchy (we do not enumerate
its edges), the setting and implementation (language models) and the actual conformal guarantee (correctness of an output rather than risk control).

\section{Preliminaries}\label{sec:prelim}
In the standard language model (LM) generation setting, we receive an input $x\in \sX$ and generate an output $y \in \sY$ according to a (potentially stochastic) generator $y=\sfL(x)$. A key problem is that $y$ may not be fully supported by a ground truth or reference $y^* \in \sY$ \citep{maynez2020faithfulness, huang2023survey}. 

\noindent\textbf{Our goal in this work} is to provide precise control over the correctness and factuality of an LM's output. As it may be difficult (if not impossible) to provide guarantees on every LM output, our goal will be to provide high-probability guarantees such that for any user-specified probability $\alpha \in (0,1)$, the LM is correct with probability at least $(1-\alpha)$ over some distribution $\P$. We express this goal as
\begin{align}
\label{eq:goal}
    \P( y  \text{ is factual and correct} ) \geq 1-\alpha.
\end{align}

Throughout this work, we will formalize this correctness constraint in terms of entailments \citep{maccartney2014natural} with respect to some reference knowledge $y^*$ where correctness is equivalent to the entailment relation $y^* \Rightarrow y$.\footnote{Entailment can be ambiguous, and our work provides guarantees for any definition of entailment (such as entailment as judged by domain experts, crowd workers, or even an automated fact checker), as long as the user has access to a binary entailment oracle. Our guarantees only require that $\forall y\in\sY, y \Rightarrow \emptyset$. } Representing factuality and correctness via entailments to a reference is quite general, as we can set $y^*$ to be a broad knowledge base such as `Wikipedia pages related to $x$' or even `all facts accessible via Google' to handle the case where there is no ground truth response for $y^*$. A key equivalence in our work will be that the factuality constraint $y^* \Rightarrow y$ can be written as a set containment relation $y^* \in \{y' \in \sY \colon y' \Rightarrow y\}$. We simplify this set using the \emph{entailment operator} $\sfoE\colon \sY \mapsto 2^{\sY}$ defined by 
\[\sfoE(y) := \{y'\in\sY \colon y' \Rightarrow y \},\] such that $y^*\in \sfoE(y)$ holds if and only if $y^*\Rightarrow y$.

\ignore{\begin{align*}
    \sfoE(\mathsf{y}) := \{\mathsf{y}' \colon \mathsf{y}' \Rightarrow \mathsf{y} \}
\end{align*}
}
%While one cannot guarantee (non-trivial) factuality with probability $1$, one might hope for a high probability guarantee where factuality holds with probability at least $1-\alpha$ for any pre-defined $\alpha \in (0, 1)$.  

Using this entailment set, we can now begin to connect our goal (\ref{eq:goal}) to a well-studied statistical inference problem: for exchangeable $(X_i, Y_i^*) \in \sX \times \sY, i \in [n+1]$, use $\{(X_i, Y_i^*) \}_{i=1}^n$ to find some uncertainty set $C \colon \sX \mapsto 2^\sY$ such that the ground truth $Y_{n+1}^*$ satisfies the following inequality:
\begin{align}\label{conformal_guarantee}
    \P(Y_{n+1}^*\in C(X_{n+1})) \geq 1-\alpha.
\end{align}
The connection between this inference problem and our correctness goal becomes clear if we replace the uncertainty set $C(X_{n+1})$ with the entailment set $\sfoE(\sfL(X_{n+1}))$ of the LM output $\sfL(X_{n+1})\in \sY$. In this case, the inequality~(\ref{conformal_guarantee}) would give precisely our correctness bound~(\ref{eq:goal}) since the event `Containing $Y^* \in \sfoE(\sfL(X))$' is equivalent to the event `$\sfL(X)$ is correct according to $Y^*$'. In the subsequent sections, we give formal constructions of our uncertainty sets and introduce conformal prediction techniques for performing the inference problem.

%\emph{Split conformal prediction} \citep{shafer2008tutorial, Gupta_2022} provides standard tools by which we can construct $C(\cdot)$ that satisfy the constraint in inequality~(\ref{conformal_guarantee}). In the remainder of this section, we provide definitions for the split conformal method which is used to construct $C(\cdot)$, as well as our construction that efficiently implements this for language models.

\noindent\textbf{Background on split conformal prediction.} 
Split conformal prediction~\citep{shafer2008tutorial, Gupta_2022} provides standard tools by which we can construct $C(\cdot)$ that satisfy the constraint in inequality~(\ref{conformal_guarantee}). Our work follows the standard split conformal prediction approach in \citep{Gupta_2022}, where one constructs a sequence of nested sets and uses exchangeable calibration data to pick a nested set that is sufficiently large to fulfill the inequality in~(\ref{conformal_guarantee}). Formally, for a threshold set $\cT \subseteq \Rset$ and each input $x\in\sX$, let $\{\cF_t(x)\}_{t\in\cT}$ denote a sequence of output
sets following the \emph{nested property}, meaning that $\cF_t(x) \subseteq \cF_{t'}(x)$ for $t \leq t'$. Consider the score
\begin{align}\label{def:rscore}
    r(x, y) := \inf \{t\in \cT \colon y \in \cF_t(x) \}.
\end{align}
This can be thought of as the minimum \emph{safe} threshold where $y \in \cF_t(x)$ for every $t>r(x,y)$.
Split conformal prediction then sets the final confidence set as
\begin{align*}
    C(x) = \cF_{\hat q_\alpha}(x),    
\end{align*}
where $\hat q_\alpha$ is the $\frac{\ceil{(n+1)(1-\alpha)}}{n}$th quantile of the scores $\{r(X_i, Y_i)\}_{i=1}^n$. This implementation satisfies the constraint in inequality~(\ref{conformal_guarantee}) (see Proposition 1 of \citep{Gupta_2022} for a proof). We now show how to generalize $\cF_t$ for language models in a way that also leads to factuality and correctness guarantee on LM outputs. 

\noindent\textbf{Application to the language setting.} 
Recall that the correctness of an LM output $y$  is equivalent to the event $y^* \in \sfoE(y)$, and we seek to find some $y$ that makes this event hold with probability at least $1-\alpha$. 
%, and we show that this is achievable if we can define conformal sets $\{\cF_t\}_{t\in\cT}$ via the entailment operator $\sfoE$.
To do this, we construct sequences of outputs $\{y_t\}_{t\in\cT}$ which induce sequences of associated conformal sets $\{\sfoE(y_t)\}_{t\in\cT}$ on which we can apply conformal prediction methods. While these sets could be nested like in \citet{Gupta_2022} (implying that the associated outputs $y_t$ become strictly `more generic' as in Figure~\ref{fig:conf-equiv}), this constraint can be hard to enforce for a language model, and we show that our main guarantees do not require nestedness. 

In conformal prediction, the key objects
are the sequence of conformal sets $\{\cF_t\}_{t\in\cT}$ and the score $r(x,y)$. We will define these two quantities for the LM setting below. 

For the conformal sets, we will define these sets using the entailment operator $\sfoE$ as $\cF_t(x) = \sfoE(\SL_{t}(x,\sfL(x)))$
where $\SL_t: \sX \times \sY \mapsto \sY$ is a `back off' function and the threshold $t\in\cT\subseteq\Rset$ controls how much $\SL_{t} (x, y_0)$ `backs off' from the base output $y_0$ by removing (unreliable) claims. We call $\SL_t$ \emph{sound} if it satisfies the property that $\SL_{\sup \cT} (x, y_0) = \emptyset$, where $\emptyset$ represents some output sequence that abstains from making any claim. For notational clarity, we will omit the second argument whenever there is only one relevant language model $\sfL(x)$ that can generate $y_0$. In this case, we use the shorthand $\SL_t(x):=\SL_t(x,\sfL(x))$.

\ignore{
\begin{figure}
    \centering 
    \begin{align*}
    \SL_0(x) = & \texttt{ Michael Jordan \hl{(born February 17,}} \\& \texttt{ \hl{1063)}, is a former professional } \\& \texttt{ basketball player.}\\
    \SL_1(x)  = & \texttt{ Michael Jordan is a former } \\& \texttt{ professional basketball player. } \\
    \SL_2(x) = & \texttt{ } \emptyset.
    \end{align*}
    \caption{Sound implementation of $\SL_t$ with $\cT = \{0, 1, 2\}$ and input sequence $x$ = $\texttt{Tell me a biography of Michael Jordan}$. Note that an incorrect date of birth is presented in $\SL_0(x)$, but not in $\SL_1(x)$ and $\SL_2(x)$; $\SL_t$ becomes factual for large enough $t \in \cT$.}
    \label{fig:sl_example}
\end{figure}
}

For the score function, we can redefine the score in~(\ref{def:rscore}) as
\begin{align}\label{new_rscore}
    r(x, y^*) := \inf \{ t\in\cT \colon \forall j \geq t, y^* \in \sfoE(\SL_j(x)) \}.
\end{align}
This matches the original score with one minor modification where we take the minimum \emph{strictly safe} threshold---we consider a threshold \emph{strictly safe} if any threshold greater than or equal to this one is safe.\footnote{The key difference with respect to \citep{Gupta_2022} is that in the definition of $r$ we write: $\forall j \geq t$. This is implicitly encoded in their definition due to the \emph{nested property} of their set predictors, but as we do not require nestedness, we instead explicitly modify our thresholds to be strictly safe.} For the example in Figure~\ref{fig:conf-equiv}, if we add $y_3 = \emptyset$ and define $\SL_t(x) := y_t$, we would have the minimum strictly safe threshold $r(x, y^*) = 2.$ 

With these two components in hand, we can directly apply the split conformal prediction method to obtain an LM with our desired correctness guarantees in inequality~\ref{eq:goal}. Formally, we say that a model $\bar \sfL$ is \emph{$\alpha$-conformally factual} if for exchangeable $(X_i, Y_i^*) \in \sX \times \sY, i \in [n+1]$ and $\{(X_i, Y_i^*) \}_{i=1}^n$ used to construct $\bar \sfL$, the reference output $Y_{n+1}^*$ satisfies the following inequality:
\begin{align*}
    \P(Y_{n+1}^*\in \sfoE(\bar \sfL(X_{n+1}))) \geq 1-\alpha.
\end{align*}
\ignore{In summary, split conformal prediction builds nested sets $\cF_t(X_{n+1})$ indexed by $t$ and a calibration set $\{(X_i, Y_i^*) \}_{i=1}^n$ selects $t$ to contain $Y^*_{n+1}$ with high probability. In our construction, we build a sequence of statements $\{\SL_t(X_{n+1}) \}_{t\in\cT}$, and a similar calibration set selects $t$ such that $\sfoE(\SL_t(X_{n+1}))$ contains $Y^*_{n+1}$ with high probability.}
%Observe that if the $Y^*_i$s represent all true facts in the context of the corresponding $X_i$s, and if $\Rightarrow$ checks logical entailments of facts \citep{maccartney2014natural}, then this inequality becomes
%\begin{align*}
%    \P( \SL_{t }(X_{n+1})  \text{ is factual} ) \geq 1-\alpha.
%\end{align*}

\begin{algorithm}[t!]
\caption{$\alpha$-conformal-factuality algorithm}\label{algo:conformally_factual_algo}
\textbf{Inputs: }{base LM $\sfL\colon \sX \mapsto \sY$, confidence $\alpha$, calibration data $\{X_i , Y_i^*\}_{i=1}^{n}$, and back-off mechanism $\{\SL_t\}_{t\in\cT}$} \\ \vspace{0.1cm}
\For{$i \leftarrow 1$ \KwTo $n$}{
    $r(X_i, Y_i^*) \gets  \inf \{ t\in\cT \colon \forall j \geq t, y^* \in \sfoE(\SL_j(x, \sfL(x))) \}$\
}
$\hat q_\alpha \gets \frac{\ceil{(n+1)(1-\alpha)}}{n}$th quantile of the scores $\{r(X_i, Y_i^*)\}_{i=1}^n$\\  \vspace{0.1cm}
\textbf{Output:} conformally factual $\bar \sfL(x) := \SL_{\hat q_\alpha}(x,\sfL(x))$.
\end{algorithm}

We present our algorithm for achieving $\alpha$-conformal-factuality in Algorithm~\ref{algo:conformally_factual_algo}, which is a procedure that takes in a base LM $\sfL\colon \sX \mapsto \sY$, target error rate $\alpha$, back-off mechanism $\SL_t$ and a calibration dataset, and produces a new LM $\bar \sfL$ that is $\alpha$-conformally factual. In the following sections, we prove high-probability factuality guarantees for Algorithm~\ref{algo:conformally_factual_algo} (Section~\ref{sec:theoretical_analysis}), and provide our implementation of $\SL_t$ (Section~\ref{sec:implementation}) with experiments on several datasets (Section~\ref{sec:experiments}).

\section{Theoretical analysis}
\label{sec:theoretical_analysis}
In this section, we present a theoretical analysis of Algorithm~\ref{algo:conformally_factual_algo}, giving upper and lower bounds matching those of standard split conformal prediction and providing a guarantee of the form in inequality~(\ref{eq:goal}).

\begin{restatable}{theorem}{CpEquiv}
\label{th:CpEquiv}
Let $\{X_i, Y_i^*\}_{i=1}^{n+1}$ be exchangeable, $\SL_t$ be sound, and $\hat q_\alpha$ be defined as the $\frac{\ceil{(n+1)(1-\alpha)}}{n}$th quantile of the scores $\{r(X_i, Y^*_i)\}_{i=1}^n$, which we assume to be distinct without loss of generality. Then, for $\alpha \in \left[\frac{1}{n+1}, 1\right]$, the following lower bound holds:
\begin{align*}
    \P( Y_{n+1}^* \in \sfoE(\SL_{\hat q_\alpha}(X_{n+1}))  ) \geq 1-\alpha.
\end{align*} 
If $\sfoE(\SL_t(\cdot))$ follows the nested property, then the following upper bound holds:
\begin{align*}
    1-\alpha + \frac{1}{n+1} \geq \P( Y_{n+1}^* \in \sfoE(\SL_{\hat q_\alpha}(X_{n+1}))  ) .
\end{align*} 

\end{restatable}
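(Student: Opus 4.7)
The strategy is to reduce the claim to the standard split conformal quantile argument by observing that the ``strictly safe'' score $r$ defined in~(\ref{new_rscore}) is engineered precisely so that the implication $\hat q_\alpha \geq r(X_{n+1}, Y_{n+1}^*) \Rightarrow Y_{n+1}^* \in \sfoE(\SL_{\hat q_\alpha}(X_{n+1}))$ holds without any nestedness assumption. Once this pointwise implication is in place, the probabilistic statement is driven entirely by the exchangeability of $(X_i, Y_i^*)$.

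\textbf{Step 1 (pointwise reduction).} First I would verify that for every $(x, y^*)$, the set $\{t \in \cT : \forall j \geq t,\ y^* \in \sfoE(\SL_j(x))\}$ is nonempty: by soundness, $\SL_{\sup \cT}(x) = \emptyset$, and since every $y'$ entails $\emptyset$, the threshold $\sup \cT$ lies in this set, so $r(x, y^*)$ is well defined. The set is upward-closed by construction, so $r(x, y^*)$ is attained and, for every $t \geq r(x, y^*)$, we have $y^* \in \sfoE(\SL_t(x))$. In particular, $\hat q_\alpha \geq r(X_{n+1}, Y_{n+1}^*)$ implies $Y_{n+1}^* \in \sfoE(\SL_{\hat q_\alpha}(X_{n+1}))$. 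This is the one place the ``$\forall j \geq t$'' modification (as opposed to the single-threshold condition in~\citep{Gupta_2022}) is used; it is what decouples the argument from nestedness.

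\textbf{Step 2 (lower bound via exchangeability).} Since $r$ is a deterministic function of $(x, y^*)$, the variables $R_i := r(X_i, Y_i^*)$ for $i \in [n+1]$ inherit exchangeability from $\{(X_i, Y_i^*)\}_{i=1}^{n+1}$. Under the distinctness assumption, the rank of $R_{n+1}$ among $R_1, \dots, R_{n+1}$ is uniform on $\{1, \dots, n+1\}$. Setting $k := \ceil{(n+1)(1-\alpha)}$, the definition of $\hat q_\alpha$ makes it the $k$-th order statistic of $R_1, \dots, R_n$, so $\P(R_{n+1} \leq \hat q_\alpha)$ equals the probability that $R_{n+1}$ ranks among the $k$ smallest of $R_1, \dots, R_{n+1}$, which is $k/(n+1) \geq 1 - \alpha$. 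Combining with Step 1 yields the lower bound.

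\textbf{Step 3 (upper bound under nestedness).} Under the nested property, I would upgrade Step 1 to an equivalence: if $Y_{n+1}^* \in \sfoE(\SL_{\hat q_\alpha}(X_{n+1}))$, then nestedness gives $Y_{n+1}^* \in \sfoE(\SL_j(X_{n+1}))$ for all $j \geq \hat q_\alpha$, so $\hat q_\alpha$ is itself strictly safe and $R_{n+1} \leq \hat q_\alpha$. Hence $\P(Y_{n+1}^* \in \sfoE(\SL_{\hat q_\alpha}(X_{n+1}))) = \P(R_{n+1} \leq \hat q_\alpha) = k/(n+1) \leq 1 - \alpha + \tfrac{1}{n+1}$, using $k \leq (n+1)(1-\alpha) + 1$.

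\textbf{Main obstacle.} The only delicate point is Step 1: one must confirm that the infimum defining $r$ is attained and that the strictly-safe definition is exactly the right monotone relaxation so that $r$ is a valid conformal score even when the sets $\sfoE(\SL_t(x))$ are not nested in $t$. Everything else is a direct re-use of the split conformal argument of~\citep{Gupta_2022}, so I would keep the probabilistic portion short and cite their Proposition 1 for the quantile bound if desired.
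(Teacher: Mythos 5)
Your proposal is correct and follows essentially the same route as the paper's proof: the same pointwise implication $\{r_{\text{test}} \leq \hat q_\alpha\} \Rightarrow \{Y_{n+1}^* \in \sfoE(\SL_{\hat q_\alpha}(X_{n+1}))\}$ driven by the strictly-safe score, the same exchangeability/rank computation $\ceil{(n+1)(1-\alpha)}/(n+1)$ for both bounds, and the same upgrade to an equivalence of events under nestedness for the upper bound. Your explicit observation that the set $\{t : \forall j \geq t,\ y^* \in \sfoE(\SL_j(x))\}$ is upward-closed is a slightly more careful articulation of why the ``$\forall j \geq t$'' modification replaces nestedness, but the argument is the same.
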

\begin{proof}
    Let $r_i = r(X_i, Y_i^*)$ for $i \in [n]$ and $r_{\text{test}} = r(X_{n+1}, Y_{n+1}^*)$. These scores are all well-defined because $\SL_{\sup \cT} = \emptyset$ and $\forall y\in\sY, y \Rightarrow \emptyset$. Without loss of generality, we can assume that the scores are sorted $r_1 < r_2 < \ldots < r_n$. In that case, $\hat q_\alpha = r_{\ceil{(1-\alpha)(n+1)}}$ when $\alpha \geq \frac{1}{n+1}$. We note that by exchangeability,
    \begin{align*}
        \P(r_{\text{test}} \leq r_{\ceil{(1-\alpha)(n+1)}}) = \frac{\ceil{(1-\alpha)(n+1)}}{n+1} \geq 1-\alpha.
    \end{align*}
    We now observe the relationship between the following two events: 
        \begin{align*}
        \{r_{\text{test}} \leq \hat q_\alpha\} \text{ implies } \{ Y_{n+1}^* \Rightarrow\SL_{\hat q_\alpha }(X_{n+1})  \},
    \end{align*}
    because if $r_{\text{test}} \leq \hat q_\alpha$, then $\hat q_\alpha$ is a safe threshold. This completes the proof of the lower bound. 
    Now, since for any $a\in\Rset, \ceil{a}\leq a+1$, we obtain the upper bound:
    \begin{align*}
        \P(r_{\text{test}} \leq r_{\ceil{(1-\alpha)(n+1)}}) & = \frac{\ceil{(1-\alpha)(n+1)}}{n+1} \\ & \leq \frac{(1-\alpha)(n+1) + 1}{n+1} \\ &  = 1-\alpha + \frac{1}{n+1}.
    \end{align*}
    If $\sfoE(\SL_t(\cdot))$ follows the \emph{nested property}, we now observe the equality of two events: 
        \begin{align*}
        \{r_{\text{test}} \leq \hat q_\alpha\} = \{ Y_{n+1}^* \Rightarrow\SL_{\hat q_\alpha }(X_{n+1})  \},
    \end{align*}
    as $\hat q_\alpha$ being a safe threshold now implies that it is larger than or equal to the minimum strictly safe threshold. This completes the proof of the upper bound. 
\end{proof}
Thus, Algorithm~\ref{algo:conformally_factual_algo} achieves $\alpha$-conformal factuality for any user-specified correctness target with $\alpha \in \left[\frac{1}{n+1}, 1\right]$, along with an upper bound when the nested property holds. Remarkably, we can guarantee that output sequences in $\SL_{\hat q_\alpha}(\cdot)$ are factual with high probability over exchangeable sequences. While we can always obtain this guarantee, it does not necessarily imply that we can retain the usefulness of the LM outputs---the threshold $\hat q_\alpha$ may be so large that they are uninformative or even empty. In the next section, we provide an implementation of $\SL_t$ that aims to keep this threshold small. 

\ignore{
\textcolor{red}{Randomness in output?} If the output of $\SL$ is meant to be random (either from the LM it implicitly represents or the way in which it removes content), it can take an additional argument $\omega \in \Omega$ to encode the randomness, which would be drawn independently along with any $X_i \sim \sD$. 
}

\section{Implementation of $\SL_t$ via sub-claims}
\label{sec:implementation}

% might be able to prettify using 
% https://tex.stackexchange.com/questions/104398/underbrace-multiple-math-lines
% From a quick look, this seems to clash with \usepackage{MnSymbol} 
\begin{figure}[t]
  \centering
  \includegraphics[width=0.6\textwidth]{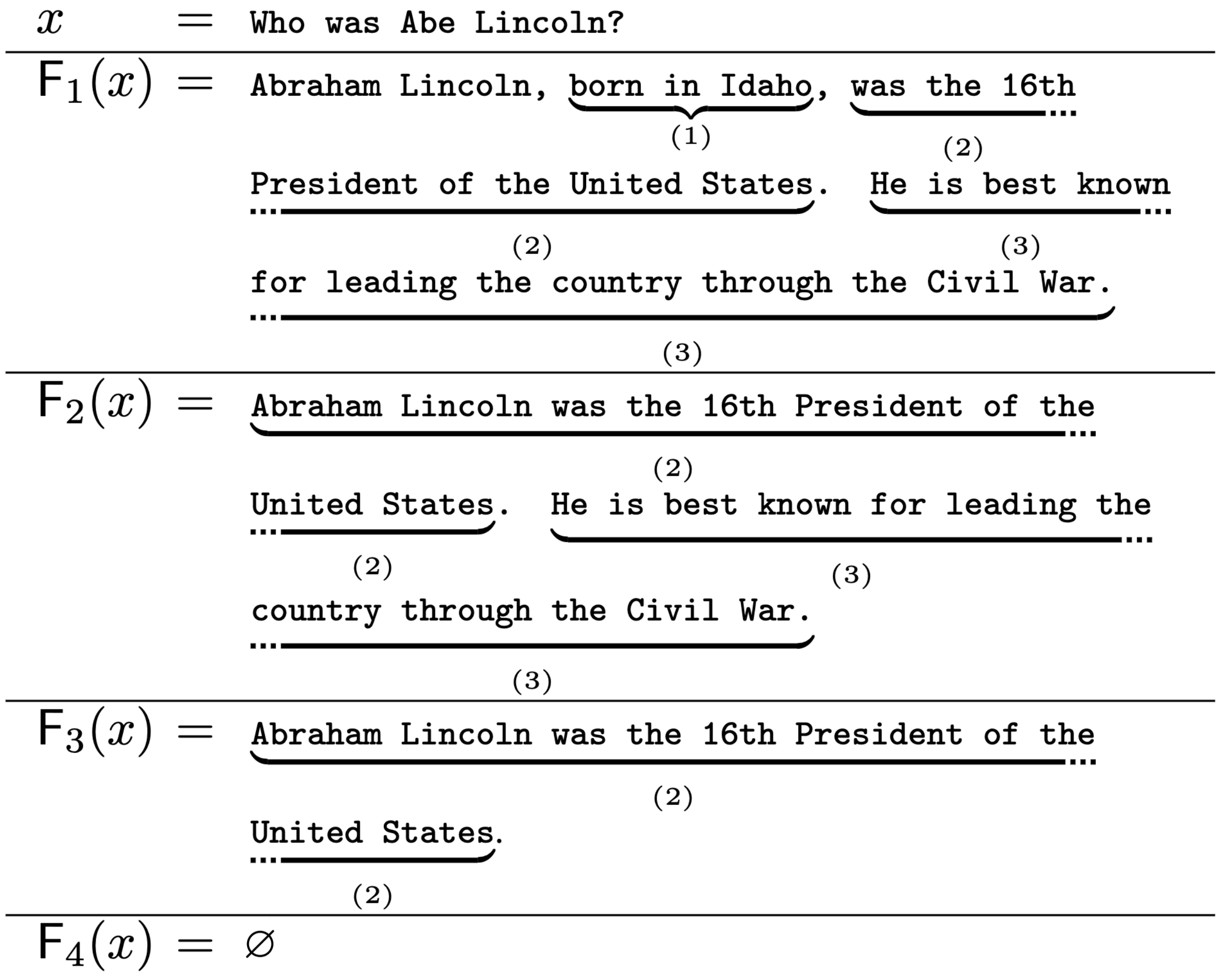}
  \ignore{\begin{tabular}{p{0.55cm}p{0.1cm}p{6.5cm}}
    $x$ & $=$ & \scriptsize{\texttt{Who was Abe Lincoln?}} \\ \hline
    $\SL_1(x)$ & $=$ &   \scriptsize{\texttt{Abraham Lincoln, $\underbrace{\texttt{born in Idaho}}_{(1)}$, $\underbracea{\texttt{was the 16th}}_{(2)}$ $\underbraced{\texttt{President of the United States}}_{(2)}$. $\underbracea{\texttt{He is best known}}_{(3)}$ $\underbraced{\texttt{for leading the country through the Civil War.}}_{(3)}$ }}
  \\ \hline
    $\SL_2(x)$ & $=$ & \scriptsize{\texttt{$\underbracea{\texttt{Abraham Lincoln was the 16th President of the}}_{(2)}$ $\underbraced{\texttt{United States}}_{(2)}$.  $\underbracea{\texttt{He is best known for leading the}}_{(3)}$ $\underbraced{\texttt{country through the Civil War.}}_{(3)}$}} \\ \hline
    $\SL_3(x)$ & $=$ & \scriptsize{$\underbracea{\texttt{Abraham Lincoln was the 16th President of the}}_{(2)}$ $\underbraced{\texttt{United States}}_{(2)}$}. \\ \hline
    $\SL_4(x)$ & $=$ & $\emptyset$ \\
  \end{tabular}}
  \caption{Example $\{\SL_t(x)\}_{t\in\cT}$ via sub-claims. Here we identified three sub-claims corresponding to (1) Abe Lincoln's birthplace, (2) his notable job, and (3) what he was best known for.  }
  \label{fig:f_t_via_subclaims_example}
\end{figure}

Our guarantees hold with any sound $\SL_t$, but ideally, $\SL_t$ should first remove unreliable parts of an output sequence as the threshold $t$ increases. We now construct an empirically effective instantiation of $\SL_t$ that makes use of the following observation: the LM often confidently knows that some subparts of its answer are correct, so it often suffices to remove the `uncertain' subparts to balance correctness and usefulness. We start by defining our implementation of $\SL_t$ inspired by this idea, and then we analyze the simple procedure that it admits for computing scores $r$. 

Our implementation identifies unreliable parts of an output sequence by decomposing it into sub-claims. Let $\sfL \colon \sX \to \sY$ be a mapping derived from a language model, $\sfS\colon \sY \mapsto 2^{\sY}$ be a function that separates an output sequence into sub-claims, and $\sfM \colon 2^{\sY} \mapsto \sY$ be a function that merges sub-claims into a single sequence and satisfies $\sfM(\emptyset) = \emptyset$. Let $s\colon 2^{\sY} \times \sY \mapsto \Rset$ be a sub-claim scoring function, where a larger score is meant to denote a larger probability of a sub-claim being factual. \ignore{\footnote{We require the first argument to allow for implementations that compare a sub-claim to other sub-claims in an output sequence.}} Intuitively, we merge a set of extracted sub-claims that were scored at least $t$. We implement $\SL_t$ as follows:
\begin{align} \label{def:sl_subclaims}
    \SL_t(x) = \sfM(\{ c \in (\sfS \circ \sfL)(x) \colon s((\sfS \circ \sfL)(x), c) \geq t \}).
\end{align} 
This implementation is \emph{sound}, as no sub-claims are accepted for large enough $t$. We provide an example in Figure~\ref{fig:f_t_via_subclaims_example} to show how a sequence of $\SL_t$ might look. Note that under the assumptions of Theorem~\ref{th:CpEquiv}, applying Algorithm~\ref{algo:conformally_factual_algo} with $\SL_t$ implemeted as in~(\ref{def:sl_subclaims}) leads to $\alpha$-conformal factuality. We will also see that the upper bound of Theorem~\ref{th:CpEquiv} holds under a simple assumption on $\sfM$. 

\renewcommand{\arraystretch}{1.5} % Adjust the factor as needed

One additional advantage of implementing $\SL_t$ via subclaims is that it can substantially reduce annotation effort. Normally, to compute the infimum in the definition of $r$, one has to evaluate entailment across all outputs $\{\SL_t(\cdot)\}_{t\in\cT}$. However, we now show that this can be done much more cheaply by only evaluating the entailment of the sub-claims once and computing an infimum over the sub-claims. This trick of computing entailments on sub-claims preserves all our guarantees under the natural assumption that the merger function $M$ that does not add or remove any sub-claims, thus preserving the entailment relations between the sub-claims and the merged output.
\begin{restatable}{assumption}{MAsssumption}
\label{assumption_m}
For any $y^*\in \sY$ and $\{c_i\}_{i=1}^n \in 2^{\sY}$,
\begin{align*}
    \{y^* \Rightarrow \sfM(\{c_i\}_{i=1}^n) 
    \} \iff  \{ \forall i \in [n], y^* \Rightarrow c_i \},
\end{align*}
which could equivalently be written as:
\begin{align*}
    \sfoE(\sfM(\{c_i\}_{i=1}^n)) = \bigcap_{i=1}^n \sfoE(c_i).
\end{align*}    
\end{restatable}
In this case, the $r$ admits a simpler form. 
\begin{restatable}{proposition}{ProSco}
\label{proposition:score}
    For $x\in \sX$, let the sub-claims accepted by $\SL_t$ be denoted by $A_t(x) := \{ c \in (\sfS \circ \sfL)(x) \colon s((\sfS \circ \sfL)(x), c) \geq t \} \in 2^{\sY}$. Under Assumption~\ref{assumption_m}, $r(x, y^*)$ can be computed as 
\begin{align*}
    r(x, y^*) = \inf \{ t\in\cT \colon \forall j \geq t, \forall c \in A_t(x),  y^* \Rightarrow c \}.
\end{align*} 
\end{restatable}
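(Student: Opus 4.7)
The plan is to unfold the definition of $r$ from equation~(\ref{new_rscore}), apply Assumption~\ref{assumption_m} to translate entailment into $\sfM$ into a claim-wise entailment condition, and then use a simple monotonicity observation about $A_t$ to collapse the inner universal quantifier.

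More concretely, I would first write
\begin{align*}
r(x,y^*) = \inf\{t\in\cT : \forall j\geq t,\, y^*\in \sfoE(\SL_j(x))\}
\end{align*}
and substitute $\SL_j(x) = \sfM(A_j(x))$ from the implementation in~(\ref{def:sl_subclaims}). By Assumption~\ref{assumption_m},
\begin{align*}
\sfoE(\sfM(A_j(x))) = \bigcap_{c\in A_j(x)} \sfoE(c),
\end{align*}
so $y^*\in\sfoE(\SL_j(x))$ is logically equivalent to $\forall c\in A_j(x),\, y^*\Rightarrow c$. Plugging this into the definition of $r$ yields the natural form with $A_j(x)$ in place of $A_t(x)$ in the inner quantifier.

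The second step is the monotonicity observation that lets us replace $A_j(x)$ by $A_t(x)$. Since $A_t(x) = \{c \in (\sfS\circ\sfL)(x) : s((\sfS\circ\sfL)(x), c) \geq t\}$ is defined by a threshold on a fixed scoring function, it is monotone non-increasing in $t$: for $j\geq t$ we have $A_j(x)\subseteq A_t(x)$. Hence $\forall c\in A_t(x),\, y^*\Rightarrow c$ implies $\forall c\in A_j(x),\, y^*\Rightarrow c$ for every $j\geq t$, and the converse is trivial by taking $j=t$. This equivalence gives the claimed identity.

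I do not anticipate a genuine obstacle here; the only subtlety is to be careful that Assumption~\ref{assumption_m} is stated for an arbitrary finite collection of sub-claims, so it applies to each $A_j(x)$ uniformly (including the corner case $A_j(x) = \emptyset$, where soundness together with $\sfM(\emptyset) = \emptyset$ and $y\Rightarrow\emptyset$ makes the conjunction vacuously true). The rest is bookkeeping: once the entailment-through-$\sfM$ step and the monotonicity of $A_t$ in $t$ are in place, the two infima coincide on the nose, and no further calculation is required.
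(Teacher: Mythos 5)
Your proposal is correct and follows essentially the same route as the paper: unfold the definition of $r$, use Assumption~\ref{assumption_m} to convert $y^*\Rightarrow \sfM(A_j(x))$ into the claim-wise condition $\forall c\in A_j(x),\, y^*\Rightarrow c$, and then identify the resulting set with the one in the statement. In fact you are slightly more careful than the paper's own proof, which substitutes $A_t(x)$ for $A_j(x)$ in the inner quantifier without comment; your explicit observation that $A_j(x)\subseteq A_t(x)$ for $j\geq t$ is exactly the step needed to justify that replacement.
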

\begin{proof}
We first observe the following equivalence:
\begin{align*}
      \{y^* \in \sfoE(\SL_t(x)) \} & \iff \{y^* \Rightarrow \SL_t(x) \}  \\
     & \iff \{ y^* \Rightarrow \sfM(A_t(x)) \} \\
     & \iff \{\forall c \in A_t(x),  y^* \Rightarrow c \}. 
\end{align*}
Using the definition of $r$, we can write
\begin{align*}
    r(x, y^*) :&= \inf \{ t\in\cT \colon \forall j \geq t, y^* \in \sfoE(\SL_j(x)) \} \\ & = \inf \{ t\in\cT \colon \forall j \geq t, \forall c \in A_t(x),  y^* \Rightarrow c \},
\end{align*}
which completes the proof. 
\end{proof}
Thus, if Assumption~\ref{assumption_m} holds, one only has to call the entailment oracle on the sub-claims appearing in the original output, which can be significantly cheaper than calling the entailment oracle on the merger of every possible set of accepted sub-claims.

Finally, we note that Assumption~\ref{assumption_m} also gives us the upper bound of Theorem~\ref{th:CpEquiv} since $\SL_t$ follows the nested property---we have $\sfoE(\SL_t(x)) = \sfoE(\sfM(A_t(x))) = \bigcap_{c\in A_t(x)} \sfoE(c)$ and an intersection of sets becomes larger as sets are removed. 

\ignore{
\textcolor{red}{If all the facts are accepted, and we make a similar assumption on $S$, we don't have to remerge, we can just take the original output.}
}

\subsection{Partial entailment}\label{subsec:partial_entailment}
Finally, we note that our framework can be extended to provide guarantees for partial correctness. Instead of guaranteeing full factuality, one may want to guarantee that $a\in[0, 1]$ fraction of the accepted sub-claims are factual. To achieve this, we can modify the definition of $r$ to allow for partially-entailed sets of sub-claims. Let $\sT_{y^*}: 2^\sY \mapsto [0, 1]$ denote an operator indicating the entailed fraction of a set of sub-claims. That is, $\sT_{y^*}( \{s_{i}\}_{i=1}^m) := \frac{1}{m}\sum_{i=1}^m \1_{y^* \Rightarrow s_i}$. Define a new score with acceptable entailment level $a \in [0, 1]$: 
\begin{align*}
    r_a(x, y^*)  := \inf \{ t\in\cT \colon \forall j \geq t, \sT_{y^*}(A_t(x)) \geq a  \}.
\end{align*}
Note that $r_1(\cdot) = r(\cdot)$ as defined in~(\ref{new_rscore}). Then, we obtain a result similar to the lower bound of Theorem~\ref{th:CpEquiv}. 
\begin{restatable}{corollary}{CorPar}
\label{corollary:par} (Partial entailment via sub-claims)
    Let the assumptions of Theorem~\ref{th:CpEquiv} hold, but with $\hat q_\alpha$ as the $\frac{\ceil{(n+1)(1-a)}}{n}$ quantile of the scores $r_a(X_i, Y_i^*)$ for $a\in[0, 1]$ and $\SL_t$ implemented as in~(\ref{def:sl_subclaims}). Then, the following lower bound holds:
\begin{align*}
    \P(\sT_{Y_{n+1}^*}( A_{\hat q_\alpha}(X_{n+1})) \geq a) \geq 1-\alpha.
\end{align*} 
\end{restatable}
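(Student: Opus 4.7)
The plan is to adapt the lower-bound portion of the proof of Theorem~\ref{th:CpEquiv} almost verbatim, substituting the partial-entailment score $r_a$ for the original score $r$. The corollary only asserts a lower bound on coverage, so the argument reduces to two ingredients already present in the earlier proof: exchangeability of the calibration scores (to quantile-bound the test score), together with a monotonicity property of $r_a$ that converts the score bound into the desired partial-entailment event. No new probabilistic machinery beyond split conformal prediction is needed.

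First I would set $r_i := r_a(X_i, Y_i^*)$ for $i \in [n]$ and $r_{\text{test}} := r_a(X_{n+1}, Y_{n+1}^*)$, and verify that these scores are well-defined under the soundness of $\SL_t$: at $t = \sup \cT$ we have $A_t(x) = \emptyset$, and with the natural convention that $\sT_{y^*}(\emptyset) \geq a$ (vacuously), the infimum in the definition of $r_a$ is over a nonempty set. Since $(X_i, Y_i^*)_{i=1}^{n+1}$ are exchangeable and $r_a$ is a symmetric functional of each pair, the scores $(r_1, \ldots, r_n, r_{\text{test}})$ are exchangeable as well. A standard quantile argument, identical to the one used in the proof of Theorem~\ref{th:CpEquiv} (reading the quantile level in the statement as $\alpha$ rather than $a$, which appears to be a typographical slip since the coverage target on the right is $1-\alpha$), then gives
\[
\P(r_{\text{test}} \leq \hat q_\alpha) \geq 1-\alpha.
\]

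The remaining step is the containment of events
\[
\{r_{\text{test}} \leq \hat q_\alpha\} \subseteq \{\sT_{Y_{n+1}^*}(A_{\hat q_\alpha}(X_{n+1})) \geq a\}.
\]
This is immediate from the definition of $r_a$ as the minimum \emph{strictly safe} threshold: whenever $\hat q_\alpha \geq r_{\text{test}}$, the threshold $\hat q_\alpha$ itself satisfies $\sT_{Y_{n+1}^*}(A_{\hat q_\alpha}(X_{n+1})) \geq a$ by the $\forall j \geq t$ quantifier inside the definition of $r_a$. Combining this containment with the quantile bound yields the corollary.

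The main obstacle is purely bookkeeping rather than conceptual: one needs to pin down the convention for $\sT_{y^*}$ on the empty set so that $r_a$ is always attained, and to make sure the ``strictly safe'' formulation of $r_a$ is what removes the need for any nestedness assumption on $\sfoE(\SL_t(\cdot))$ (mirroring the footnote remark after the definition of $r$ in the main text). Once these points are fixed, the argument is a mechanical specialization of the lower-bound half of Theorem~\ref{th:CpEquiv}.
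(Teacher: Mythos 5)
Your proposal is correct and follows essentially the same route as the paper: the lower-bound half of Theorem~\ref{th:CpEquiv} applied to the scores $r_a$, combined with the one-directional event implication $\{r_{a,\text{test}} \leq \hat q_\alpha\} \subseteq \{\sT_{Y_{n+1}^*}(A_{\hat q_\alpha}(X_{n+1})) \geq a\}$, which the paper likewise notes is not an equivalence because $\sT_{y^*}(A_t(x))$ need not be monotone in $t$. Your added bookkeeping (the vacuous convention for $\sT_{y^*}(\emptyset)$ ensuring $r_a$ is well-defined, and flagging the $1-a$ versus $1-\alpha$ quantile level as a typographical slip) is consistent with the paper's intent.
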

\begin{proof}
    The proof is identical to lower bound of Theorem~\ref{th:CpEquiv}, but now we note the following relationship between two events (where $r_{\text{a, test}} := r_a(X_{n+1}, Y_{n+1}^*))$:
    \begin{align*}
        \{r_{\text{a, test}} \leq \hat q_\alpha\} \text{ implies } \{\sT_{Y_{n+1}^*}( A_{\hat q_\alpha}(X_{n+1})) \geq a \}.
    \end{align*}
    This is not an equivalence, because we could have a safe threshold $\hat q_\alpha$ that is less than $r_{\text{a, test}}$. That is, $\sT_{y^*}(A_t(x))$ is not monotonically increasing with $t$ since one could remove sub-claims that are entailed and fall below $a$.
\end{proof}
Note that this result is with respect to the sub-claims themselves, rather than the merged sub-claims, but simple approaches such as providing the sub-claims alongside the output can enable users to verify the correctness of $M$.\footnote{We leave this result in terms of the sub-claims since there is no agreed-upon sub-claim separator function and it would not make sense to say that an output sequence is $a$-fraction factual without one.}\ignore{, and if one were to use $S$, computing the score $r$ may require re-separating the merged sub-claims unless strong assumptions such as $S$ and $M$ being inverses of each other are made.} 
Empirically, we found gains from allowing partial factuality to be small, and cover empirical evaluations of this class of approaches in Appendix~\ref{app:partial-factuality}. 

\ignore{
\subsection{$\SL$ algorithm via Sub-claims.}

We describe in detail the $\SL$ algorithm under this implementation of $\SL$.

\begin{enumerate}
    \item Identify definitions of $y^*$, $\Rightarrow$, $L$, $S$, $M$, and a scoring function of sub-claims $s \colon 2^\sY \times \sY \mapsto \Rset$, where a larger score should denote a larger probability that a sub-claim is entailed. $L$, $S$, and $M$ can for example be derived by prompting a language model. 
    \item Over a held-out sample $\{X_i\}_{i=1}^n$, compute the $\frac{\ceil{(n+1)(1-\alpha)}}{n}$th quantile of the scores $r_a(X_i, Y_i^*)$. 
    \item Use that quantile to form new output sequences via $\SL$.
\end{enumerate}
}

\section{Experiments}
\label{sec:experiments}

While we provide conformal factuality guarantees on LM outputs derived from Algorithm~\ref{algo:conformally_factual_algo}, we still need to verify that they are indeed factual and useful. To verify these two items, we apply our algorithm to standard question-answering and reasoning tasks where correctness and factuality guarantees would be useful. We first explicitly describe our experimental set-up, and then present results across $3$ datasets. 

\ignore{
Note: there is a bug in \citep{angelopoulos2022gentle} example code, in the computation of the quantile. This is a problem when there is a small amount of data.}

\subsection{Experimental set-up}

Here we instantiate all the pieces necessary to implement $\SL_t$ via sub-claims and describe our datasets. 

\subsubsection{Models}

The definition of $\SL_t$ via sub-claims in~(\ref{def:sl_subclaims}) depends on multiple language models: a base mapping from input sequences to output sequences $\sfL$, a sub-claim separator $\sfS$, and a merger function $\sfM$. As a proof of concept, we implement each of these by using GPT-4. $\sfL$ is implemented by using GPT-4 directly, and $\sfS$ and $\sfM$ by using GPT-4 with prompts to separate and merge (which we present in Appendix~\ref{app:prompts}).

\subsubsection{Sub-claim scoring functions}

Recall that any sub-claim scoring function $s(\cdot)$ introduced in Section~\ref{sec:implementation} leads the guarantees of Theorem~\ref{th:CpEquiv}. Below, we provide the definitions of several natural ones that are used in our experiments.  For some input and ground truth pair $(x, y^*) \in \sX \times \sY$, let the ordered set of $n$ extracted sub-claims be $\{c_i\}_{i=1}^n = (\sfS \circ \sfL)(x)  \in 2^\sY$, sorted by where they appear in $\sfL(x)$. Below we define scoring functions for a particular subclaim $c_j \in\{c_i\}_{i=1}^n$ (and assign a score of $-\infty$ to any sub-claim not in the set $\{c_i\}_{i=1}^n$). 

We first define two baselines. 

\noindent\textbf{Random scoring.} This method assigns random scores to sub-claims and is defined as:
\begin{align*}
    s_r(\{c_i\}_{i=1}^n, c_j) = X_j, \text{ where $X_j \sim \sN(0, 1)$}.
\end{align*}

\noindent\textbf{Ordinal scoring.} This method assigns scores corresponding to the order a sub-claim appeared in an output sequence, and is defined as:
\begin{align*}
    s_b(\{c_i\}_{i=1}^n, c_j) = n - j.
\end{align*}

The following two scoring functions use an LLM like GPT-4 and are ones to consider using in practice. 

\noindent\textbf{GPT-4 confidence scoring.} Motivated by \citet{tian2023just, guan2023language}, this method directly asks GPT-4 for a confidence score. We present our prompt in Appendix~\ref{app:prompts}. \ignore{ The function \texttt{get-conf} below is implemented by prompts presented in Appendix~\ref{app:prompts}.  The score is defined as:
\begin{align*}
    s_g(\{c_i\}_{i=1}^n, c_j) = \texttt{get-conf} (\{c_i\}_{i=1}^n, c_j) .
\end{align*}}

\noindent\textbf{Frequency scoring.} Motivated by self-consistency approaches \citep{wang2023selfconsistency, manakul2023selfcheckgpt}, this method first samples $5$ alternate output sequences with temperature $1.0$ and then counts (with GPT-4) the number of times a sub-claim appeared in the alternate output sequences. The prompt used to implement this method appears in Appendix~\ref{app:prompts} as well.\footnote{In our experiments, we break the ties among these scores using the GPT-4 confidence score.} \ignore{The definitions for $\texttt{count}$ and $\texttt{sample}$ are presented in Appendix~\ref{app:prompts}. The score is defined as:
\begin{align*}
        s_f(\{c_i\}_{i=1}^n, c_j) = \texttt{count}_{c_j}(\texttt{sample}_5(x))
\end{align*}
}

Finally, to provide an upper bound on performance, we include the following oracle scoring method. 

\noindent\textbf{Oracle scoring.} This method assigns scores corresponding to true entailment. Of course, this is not possible to use without knowledge of $y^*$, and is not efficient when entailment is meant to be checked by a human. It is defined as: 
\begin{align*}
    s_o(\{c_i\}_{i=1}^n, c_j) = \1_{y^* \Rightarrow c_j}.
\end{align*}

Since our guarantees require no ties among the scores $r$, we tie-break using $\sN(0, 0.001)$ noise, and we ensure the noise terms are consistent across sub-claim scoring functions.

\ignore{
Any scoring function can be transformed into a \emph{ranking-based} scoring function. These aim to remove a fixed percentage of facts, which may be desirable in cases where modifying the outputs in roughly the same way is desirable. Here, for a sub-claim scoring function $s\colon 2^\sY \times \sY$, we define a ranking function $f_s \colon 2^\sY \times \sY \mapsto \Nset$, where $f(\{c_i\}_{i=1}^n, c_j)$ returns the rank of $s(\{c_i\}_{i=1}^n, c_j)$ among the set $\{s(\{c_i\}_{i=1}^n, c_k)\}_{k=1}^n$ in increasing order. Then, for each of sub-claim scoring functions $s$ defined above, we apply the score:  
\begin{align*}
     \frac{f_s(\{c_i\}_{i=1}^n, c_j)}{n}.
\end{align*}
We divide by $n$ to account for varying-length output sequences. \ignore{Note that the ranking-based baseline scoring function loosely matches asking the model to produce less or only considering some first fraction of its output. }}

\subsubsection{Datasets and Annotation}

We study 3 datasets covering a range of tasks that require correctness. Below, we describe both the datasets we build on and the additional factuality annotations we collect.

\noindent\textbf{FActScore} \citep{factscore}. 
FActScore is a common factuality evaluation for open-ended generation, which works by breaking a generation down into atomic facts and then evaluating them with a given knowledge source. We use the people entities from their biography generation dataset, but we generate our own sub-claims using $S$ for consistency. \ignore{Our input sequences $x$ are of 
the form $\texttt{Tell me a paragraph bio of } \text{\{person\}}$.}

\noindent\textbf{Natural Questions (NQ)} \citep{kwiatkowski-etal-2019-natural}. NQ evaluates factuality in open-ended question answering through real queries to the Google search engine. We use questions from the simplified training dataset and allow the model to respond as a long-form response.  

\noindent\textbf{MATH} \citep{hendrycks2021measuring}. This is a dataset of math word problems, and we use it to show that our correctness framework can also be applied to reasoning tasks. Answers to reasoning tasks typically involve a sequence of steps, and when we associate these steps with sub-claims, we can immediately apply our framework to return only the correct steps and abstain from the rest.

We select the first $50$ inputs from each dataset and manually annotate the generations produced by GPT-4. When examining earlier works on factuality evaluation with crowd-workers, we found extensive errors and thus chose to annotate the data ourselves using a 4-way label (Factual, Subjective, Unverifiable, and False), where factuality judgments were verified using Google. We considered Factual and Subjective as entailed, and others as not entailed. All factuality annotations were done before running our experiments.

\subsection{Results}
In this section, we first verify that Algorithm~\ref{algo:conformally_factual_algo} indeed achieves the factuality guarantees of Theorem~\ref{th:CpEquiv}, and then we assess the utility of our outputs both quantitatively and qualitatively.

\subsubsection{Empirical factuality} \label{subsec:empirical_factuality}
\begin{figure}[!h]
  \centering\includegraphics[width=0.45\textwidth]{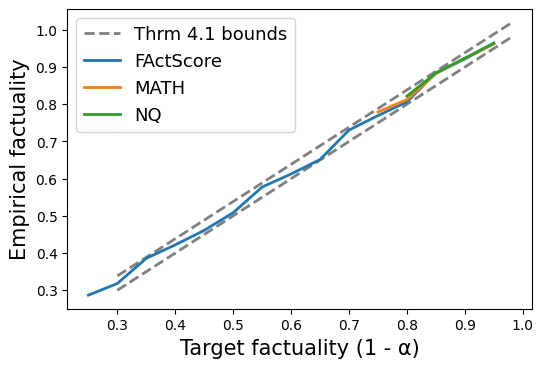}
  \caption{Target vs. empirical factuality. Each solid line starts at the base factuality of GPT-4 on the associated dataset. NQ and MATH overlap on the top right.}
  \label{fig:calib}
\end{figure}

Our main result, Theorem~\ref{th:CpEquiv}, states that we should attain roughly $1-\alpha$ factuality. To check that this happens in practice, we randomly split our datasets in half $1000$ times, fitting a threshold on the first half and measuring the factuality on the second half. This is done with $\SL_t$ implemented via sub-claims with frequency scoring. We plot the results in Figure~\ref{fig:calib}, and defer similar plots with the remaining scoring function to Appendix~\ref{app:calib}. These results show very tight control over factuality, which Theorem~\ref{th:CpEquiv} guarantees marginally over both the draw of a calibration set and test point. Going beyond standard conformal guarantees, if we require additional high probability guarantees over the calibration set, we find that there is additional variation in the empirical factuality with standard deviation $ \sim\!\! 0.09$, which we expect to decrease to zero as the calibration set grows large.

\subsubsection{Utility}

\begin{figure*}[!t]
  \centering
  {\includegraphics[width=0.28\textwidth]{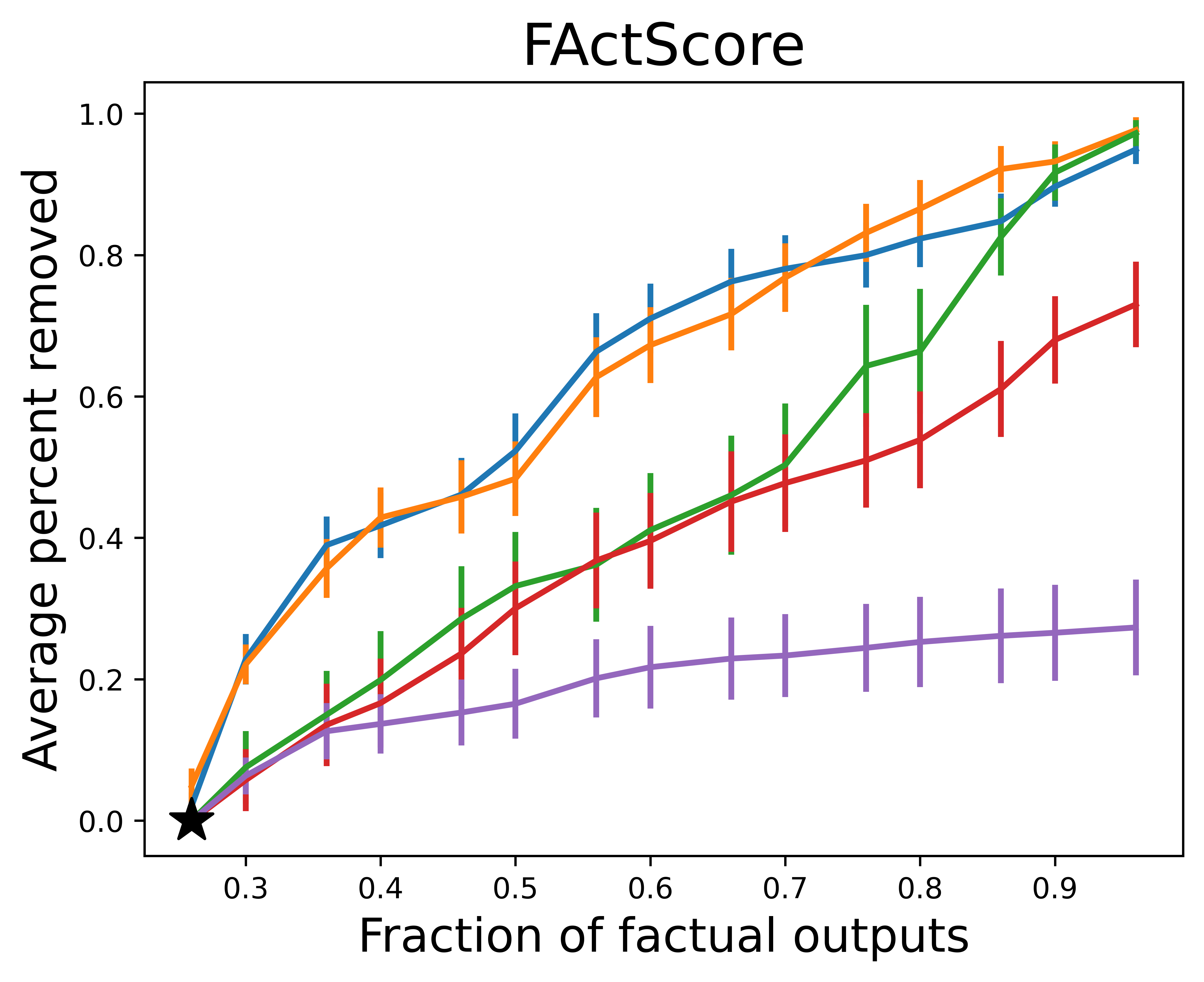}}
  {\includegraphics[width=0.28\textwidth]{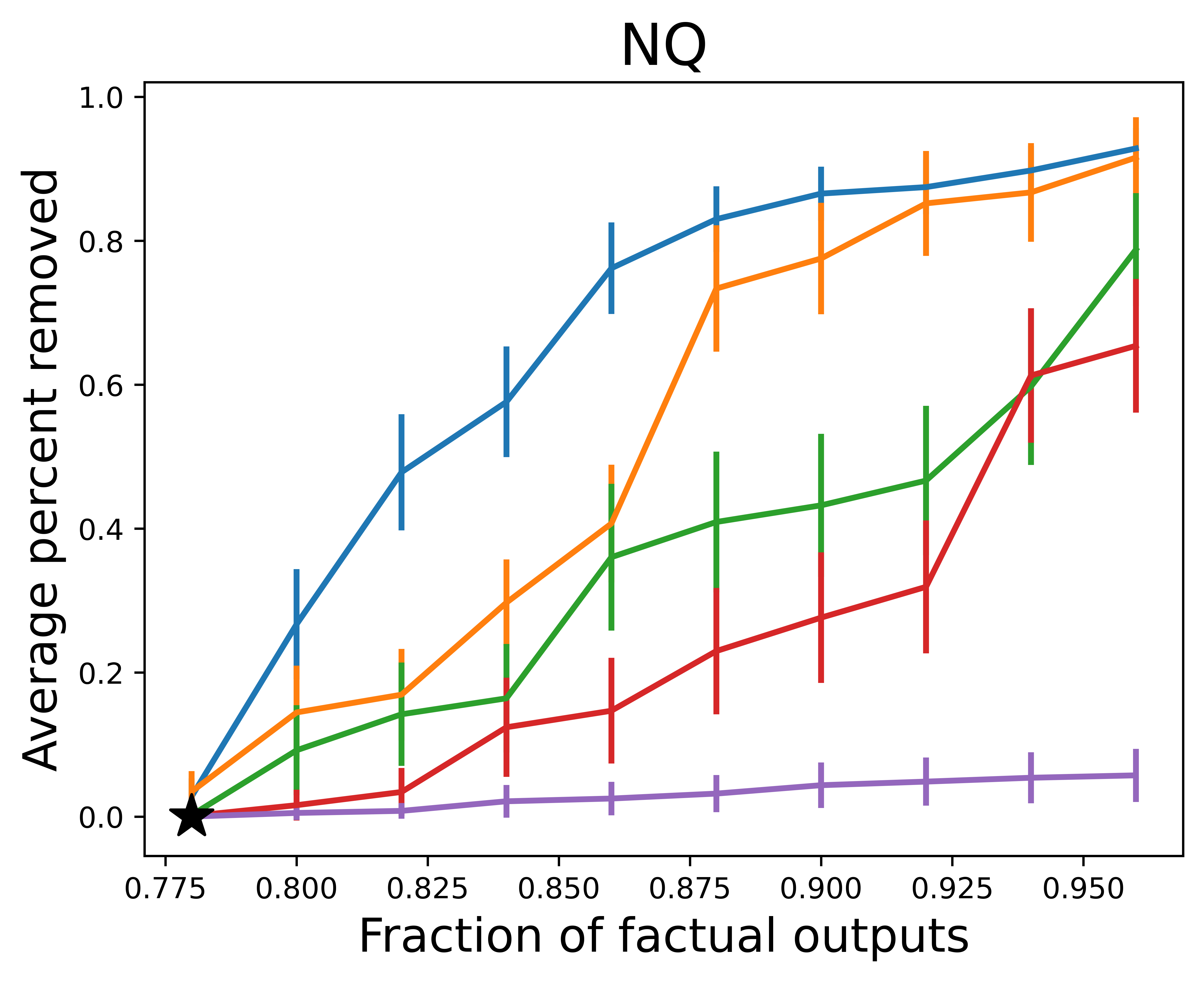}}
  {\includegraphics[width=0.4\textwidth]{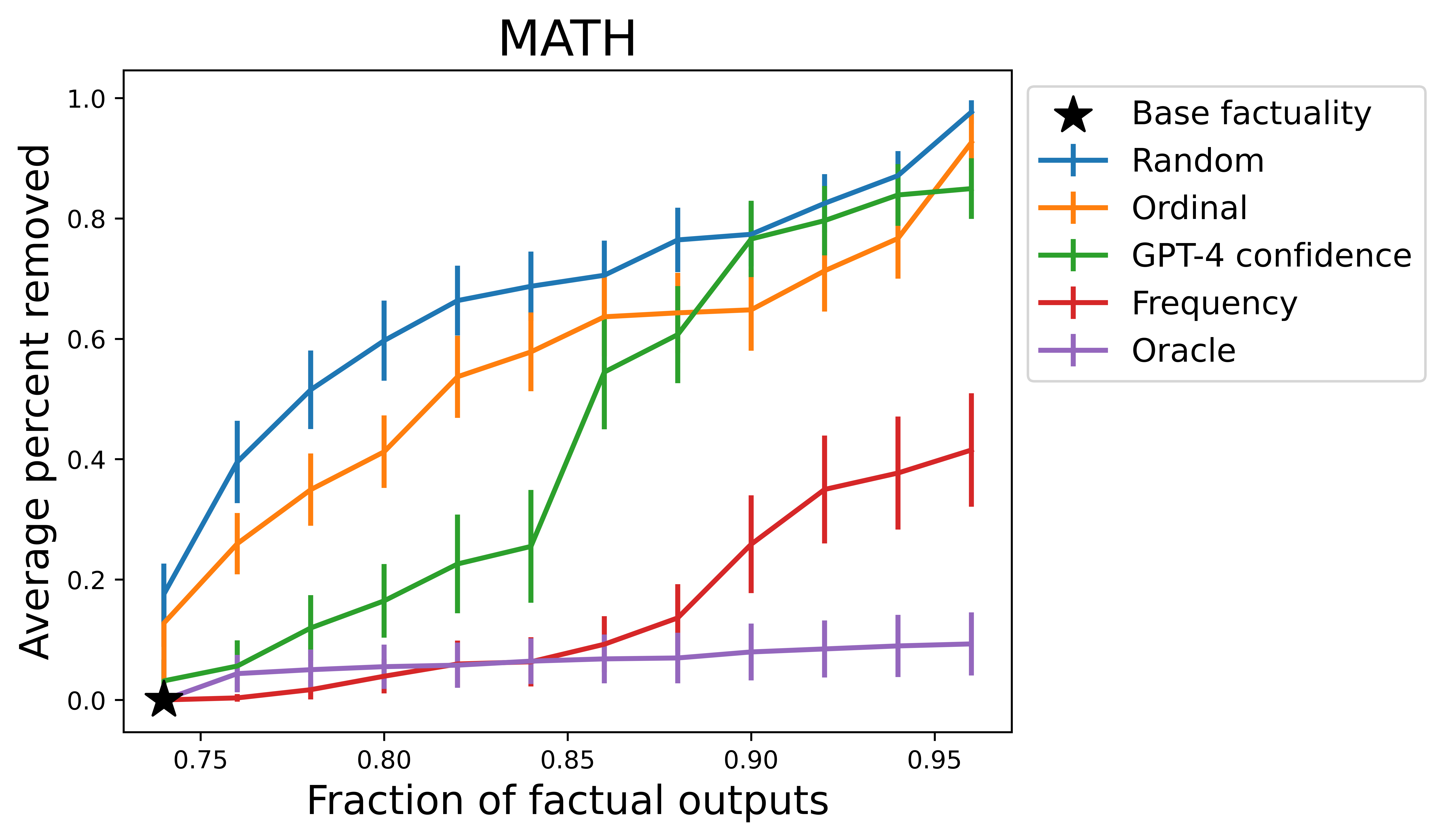}}
  \caption{Factuality vs. percent of sub-claims removed across all datasets. Frequency scoring (red) can lead to significant ($20-50\%$) gains in correctness while retaining the majority of claims when compared to the base GPT-4 model (star). The tick marks correspond to different values of target $\alpha$, and the standard deviations represent standard error.}
  \label{fig:factuality_vs_avg_removed}
\end{figure*}

The utility of the outputs crucially depends on the quality of the sub-claim scoring function: if a scoring function removes many correct sub-claims before the incorrect ones, the resulting output may be uninformative. We now evaluate our scoring functions to identify whether any of them achieve acceptable utility at various factuality levels. 

We start by plotting the percent of sub-claims that are removed under a leave-out-out evaluation and varying the confidence level (Figure~\ref{fig:factuality_vs_avg_removed}). %This corresponds to exactly the setting of the conformal guarantee in Theorem~\ref{th:CpEquiv} and the x-axis represents the attained factuality guarantee.

First, the star on the bottom left of each plot represents the base performance of GPT-4, which has relatively low correctness across all tasks. For FActScore, this is very low and the large majority of outputs had at least one sub-claim that we deemed non-factual. For the remaining two datasets, the base performance of GPT-4 is higher at around $75\%$, but still too low for many high-stakes settings. 

Second, Figure~\ref{fig:factuality_vs_avg_removed} suggests that our two scoring functions derived from an LM --- frequency scoring and GPT-4 confidence scoring --- provide reasonable tradeoffs between utility and factuality. For frequency scoring, we observe particularly promising performance: factuality on FActScore can be increased from $\sim \!\!25\%$ to $\sim \!\!80\%$ while keeping around half of the original sub-claims, on NQ we can increase factuality by $\sim \!\!15\%$ by only removing about one-quarter of sub-claims, and on MATH we can similarly increase factuality by $\sim \!\!15\%$ but by only removing $\sim \!\!10\%$ of sub-claims. 

\begin{figure}[!h]
  \centering
  {\includegraphics[width=0.42\textwidth]{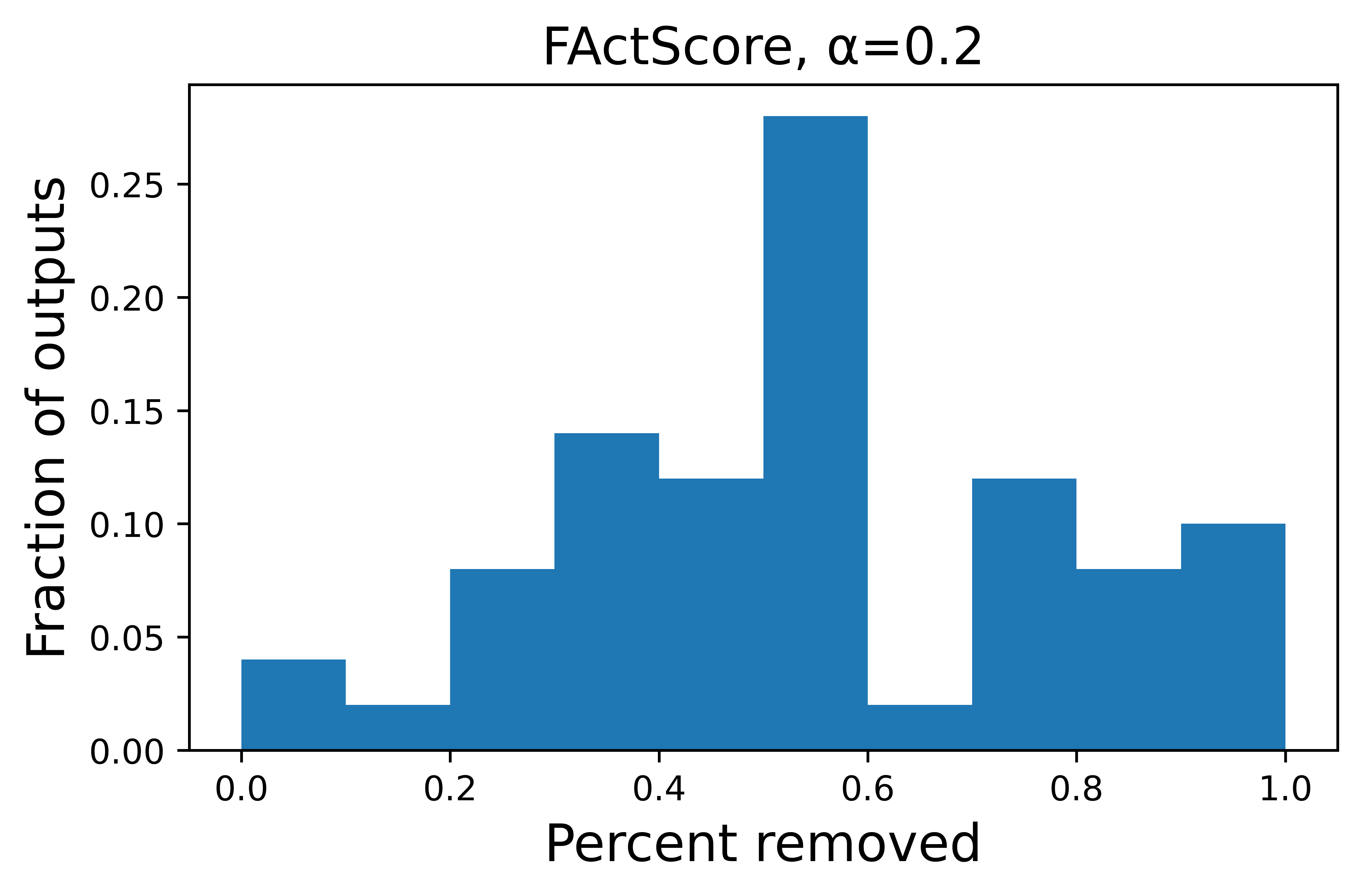}}
  %{\includegraphics[width=0.33\textwidth]{}}
  %{\includegraphics[width=0.33\textwidth]{}}
  %\caption{Histograms of percent of sub-claims removed for the frequency scoring function and selected confidence levels $\alpha$. }
  \caption{Histogram of percent of sub-claims removed for frequency scoring and $\alpha=0.2$ on FActScore. }
  \label{fig:histograms}
\end{figure}

Taking a closer look at individual points on the plots of Figure~\ref{fig:factuality_vs_avg_removed}, we show that the behavior of our method is far from simply abstaining on difficult examples. Analyzing the $\alpha=0.2$ model for FActScore, we find that most examples have $\sim \!\!50\%$ of their claims removed, leading to a less detailed but still informative responses in most cases. The standard deviations for the percent of sub-claims removed can be made very low (at the cost of slightly worse means) with $\emph{ranking-based}$ scoring functions, which we define and present results for in Appendix~\ref{app:ranking}. 

Finally, we complement our quantitative analysis with samples generated by the conformal factuality model with frequency scoring. Our main dataset of interest is FActScore, as GPT-4 hallucinates aggressively on this task. We thus choose the $\alpha=0.2$ threshold and present examples in Table~\ref{tab:io_examples}, with further examples in Appendix~\ref{app:more-io}. In all cases shown here, we find that the original GPT-4 output contained a significant fraction of falsehoods, which was successfully removed with conformal factuality.

\begin{table*}[!ht]
  \centering
  \begin{tabular}{p{8.5cm} p{6.5cm}}
    \small{\textbf{Original}} & \small{\textbf{Conformal Factuality}}  \\
    \scriptsize{\texttt{Zamfir Ralli-Arbore (1848-1933) was a Romanian political activist and historian from Bessarabia, who spent much of his life in exile. \hl{As a member of the National Liberal Party, he campaigned for the union of his native region with the Kingdom of Romania}, and was a prominent opponent of Russian and Soviet policies. He was also a noted historian, \hl{specializing in the history of the Moldavia and Wallachia during the Middle Ages.}}} & \scriptsize{\texttt{Zamfir Ralli-Arbore, born in 1848, was a Romanian political activist from Bessarabia. He passed away in 1933.}} \\
    \hline\ignore{
    \small{\textbf{Rennie Fritchie:}}\\
    \scriptsize{\texttt{Rennie Fritchie, Baroness Fritchie, DBE (born 1942) is a \hl{British public servant} and crossbench member of the House of Lords. She has held a number of high-profile roles in public service, including serving as the Commissioner for Public Appointments in the United Kingdom from 1999 to 2005. \hl{She was also the Chair of Nominet UK, the .uk domain name registry, from 2005 to 2010.} In 2005, she was made a life peer as Baroness Fritchie, of Gloucester in the County of Gloucestershire. \hl{She has also served on the boards of several other organizations, including the BBC and the Prince's Trust.}}
    } & \scriptsize{\texttt{Rennie Fritchie, Baroness Fritchie, DBE, is a \hl{British public servant} who was born in 1942. She has held a number of high-profile roles in public service, including serving as the Commissioner for Public Appointments in the United Kingdom from 1999 to 2005. In 2005, she was made a life peer as Baroness Fritchie, of Gloucester in the County of Gloucestershire. She is also a crossbench member of the House of Lords.
}} \\
    \hline}
    \scriptsize{\texttt{John Estes may refer to: 1. Sleepy John Estes (1899-1977), American blues guitarist, songwriter and vocalist. \hl{2. John Adam Estes (1919-2001), American politician who served as a member of the Texas House of Representatives.} 3. John Estes (American football) \hl{(born 1986)}, American football center. \hl{4. John C. Estes, American screenwriter known for his work on films like "Mean Creek" and "The Details".} Without more specific information, it's hard to provide details on the correct John Estes.}
    } & \scriptsize{\texttt{Sleepy John Estes was an American blues guitarist, songwriter and vocalist. He was born in 1899 and died in 1977.
}} \\
    \hline
    \scriptsize{\texttt{Michael Valpy is a Canadian journalist, author, and lecturer. He was born on \hl{July 30}, 1942 in Toronto, Ontario. He is best known for his work as a senior writer for The Globe and Mail, a leading Canadian newspaper, where he covered both national and international news. Valpy has also written several books on Canadian politics and history, and has taught journalism at several universities. \hl{He is a three-time nominee for the Governor General's Award}, Canada's top literary honor.}
    } & \scriptsize{\texttt{Michael Valpy is a Canadian journalist, author, and lecturer. He is best known for his work as a senior writer for The Globe and Mail, a leading Canadian newspaper. In his career, Valpy covered both national and international news for the newspaper. In addition to his journalism career, he has written several books on Canadian politics and history. Valpy has also taught journalism at several universities.
}} \\
\ignore{   \hline
    \scriptsize{\texttt{The first Prime Minister of Nigeria was Sir Abubakar Tafawa Balewa, \hl{serving from 1957 until 1966.}}
    } & \scriptsize{\texttt{The first ministry head of state in Nigeria was Sir Abubakar Tafawa Balewa, who served as the country's Prime Minister.
}}\\}
   \hline
    \scriptsize{\texttt{We have $i^5=i^4\cdot i=i$, $i^{-25}=$\hl{$i^{-24}\cdot i^{-1}=i$}, and $i^{45}=i^{44}\cdot i=i$. Therefore, \hl{$i^5+i^{-25}+i^{45}=\boxed{3i}$}}
    } & \scriptsize{\texttt{The given steps are:$i^5=i^4\cdot i=i$\textcolor{gray}{\textbackslash n\textbackslash n}$i^{45}=i^{44}\cdot i=i$\textcolor{gray}{\textbackslash n\textbackslash n}Using these steps, we can simplify the given expression:$i^5+i^{-25}+i^{45}=i+i^{-25}+i=i+i^{-25}+i$  
}} \\
  \end{tabular}
  \caption{Comparison of original output by GPT-4 versus our modified output from conformal factuality with frequency scoring. Incorrect claims are highlighted in \hl{red}. The first three examples are from FActScore and use $\alpha=0.2$. The last sample from MATH, where the question asked for the evaluation of the expression $i^5+i^{-25}+i^{45}$, uses $\alpha=0.1$.}
  \label{tab:io_examples}
\end{table*}

In addition to the FActScore outputs, we also show an example from MATH in Table ~\ref{tab:io_examples} (with $\alpha=0.1$) which shows how our method can be successfully applied to reasoning tasks. Here, the math problem required evaluating three expressions and then taking their sum, but GPT-4 was not able to correctly evaluate one of the expressions and thus provided an incorrect final answer. The conformal factuality model identified and removed the incorrect claim, leading to a partial proof that was presented to the user with a single remaining step left for the user.

\ignore{Note that all plotted lines intersect the origin, meaning that \emph{none of the original model outputs were factual}. For several of the sub-claim scoring functions, we are able to retain a non-trivial amount of the sub-claims while guaranteeing a reasonable fraction of fully factual outputs. Across both ranking and non-ranking scoring functions, frequency+GPT-4 scoring seems to perform best for small $\alpha$ (high factuality). The standard deviations for non-ranking-based scoring functions are much larger because they remove a much more variable number of sub-claims, but the respective means are generally smaller. }
\ignore{
\begin{figure}
    $\sfL(x) =$  `Lanny Flaherty is an American actor known for his roles in a variety of films and television shows. Born on July 27, 1942 in Pontotoc, Mississippi, he has been active in the entertainment industry since the 1970s. \hl{Some of his most notable roles include Sheriff Tatum in ``Natural Born Killers," Earl in ``Water for Elephants," and Deputy Clinton Pell in ``Mississippi Burning." Flaherty has also made appearances in popular television series such as ``Boardwalk Empire" and ``Law \& Order."} {Despite his extensive filmography, he has managed to maintain a relatively low profile in Hollywood.}' \\
    $A_{\hat q_\alpha}(x)$  = `{Lanny Flaherty is an American actor}', `{He is known for his roles in a variety of films and television shows}', '{Flaherty was born on July 27, 1942}', '{He was born in Pontotoc, Mississippi}'\\
    $M(A_{\hat q_\alpha}(x))$ = `{Lanny Flaherty is an American actor known for his roles in a variety of films and television shows. He was born on July 27, 1942 in Pontotoc, Mississippi.}' 
    \caption{Full example with $\alpha=0.3$ and frequency scoring. $\sfL(x)$ denotes the original model output, $A_{\hat q_\alpha}$ denotes the accepted sub-claims, and $M(A_{\hat q_\alpha})$ denotes the merged accepted sub-claims. The red highlight denotes the non-factual text. After applying our method, the output sequence is factual.}
    \label{fig:io_example}
\end{figure}
}

\section{Conclusion}
We described conformal factuality, a framework that connects conformal prediction and language modeling, and importantly leads to a practical algorithm for obtaining conformal factuality guarantees on LM outputs. We gave a natural implementation with sub-claim scoring functions and showed that we can indeed get factual and useful outputs on both question-answering and reasoning datasets. Our work still has limitations, including the guarantees being restricted to a prescripted distribution $P$, and the fact that our bounds are marginal with respect to the draw of calibration and test data. 
But our work is the first step to enabling the application of many sophisticated conformal prediction algorithms in improving the outputs of language models -- including those that address challenges such as distribution shifts.

\section*{Limitations} 
Our work depends on standard split conformal prediction, which has important limitations in practice. First, the coverage guarantee is not conditional, meaning that we do not have a guarantee on the conformal set associated with every input, but instead, we have coverage on average over inputs.  This means that when the calibration set is defined across multiple tasks or users, the factuality of any one domain or user may differ significantly from the target factuality level.
Second, the conformal guarantee is also marginal over the draw of the calibration set. This means that in settings where one has a small, fixed calibration dataset and repeatedly uses the same threshold, the coverage associated with this threshold may deviate from the target coverage. Lastly, in real-world scenarios where distributions change, the threshold computed on past calibration data can fail to maintain the desired coverage. In our case, this means that the factuality guarantee of our language model may be lost if, for example, the distribution of inputs were to change drastically.  

While these drawbacks of conformal prediction exist, we believe this work is a step in the right direction toward guaranteeing the factuality of language models. Several works build on the framework of split conformal prediction to tackle the challenges mentioned, and the connections established in this work could enable the use of those approaches \citep{gibbs2023conformal, ding2023classconditional, barber2023conformal, tibshirani2020conformal, vovk2012conditional}. Moreover, we observe promising experimental results that show how we can effectively remove hallucinations from language model outputs, and we expect this to improve with better uncertainty quantification methods.

\section*{Acknowledgements}
%Who did we talk to about this project?
% Zitong, John D, .. anyone else? Neil?
We thank Zitong Yang for comments related to an earlier version of this manuscript, as well as Neil Band and John Duchi for helpful discussions. TH was supported by a gift by Open Philantropy, the Tianqiao and Chrissy Chen Foundation, and a grant by Samsung GRO.

\newpage
\newpage
\newpage

\bibliographystyle{apalike} 
\bibliography{main,all}

\newpage
\appendix
\onecolumn

\ignore{\renewcommand{\contentsname}{Contents of Appendix}
\tableofcontents
\addtocontents{toc}{\protect\setcounter{tocdepth}{3}} 
\clearpage}

\ignore{with:
\begin{align*}
    \texttt{Please breakdown the following into independent facts:} \text{\{L(x)\}}. 
\end{align*}

\begin{figure}[ht]
  \centering
  \begin{subfigure}
    \centering
    \includegraphics[width=\linewidth]{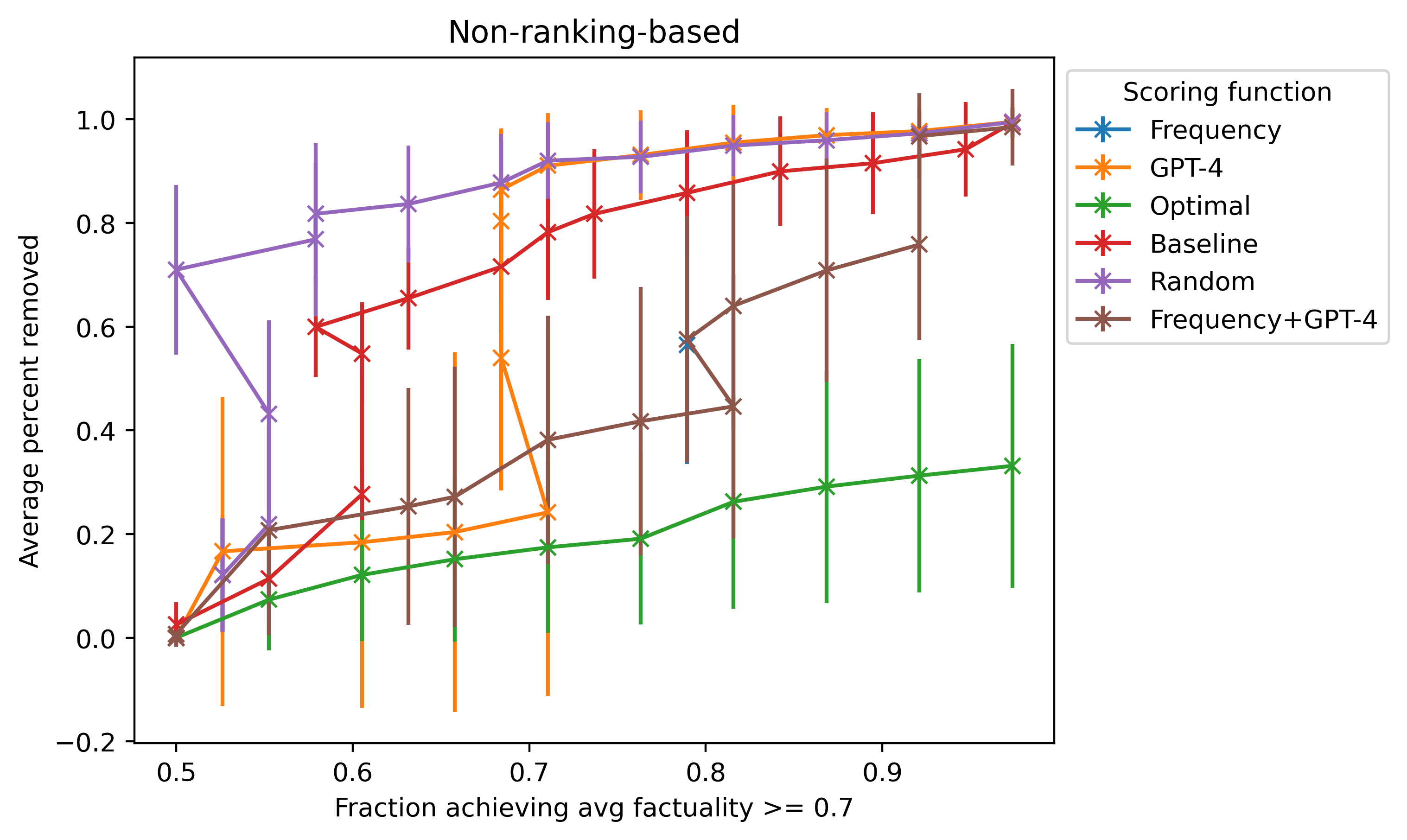}
  \end{subfigure}
  \begin{subfigure}
    \centering
    \includegraphics[width=\linewidth]{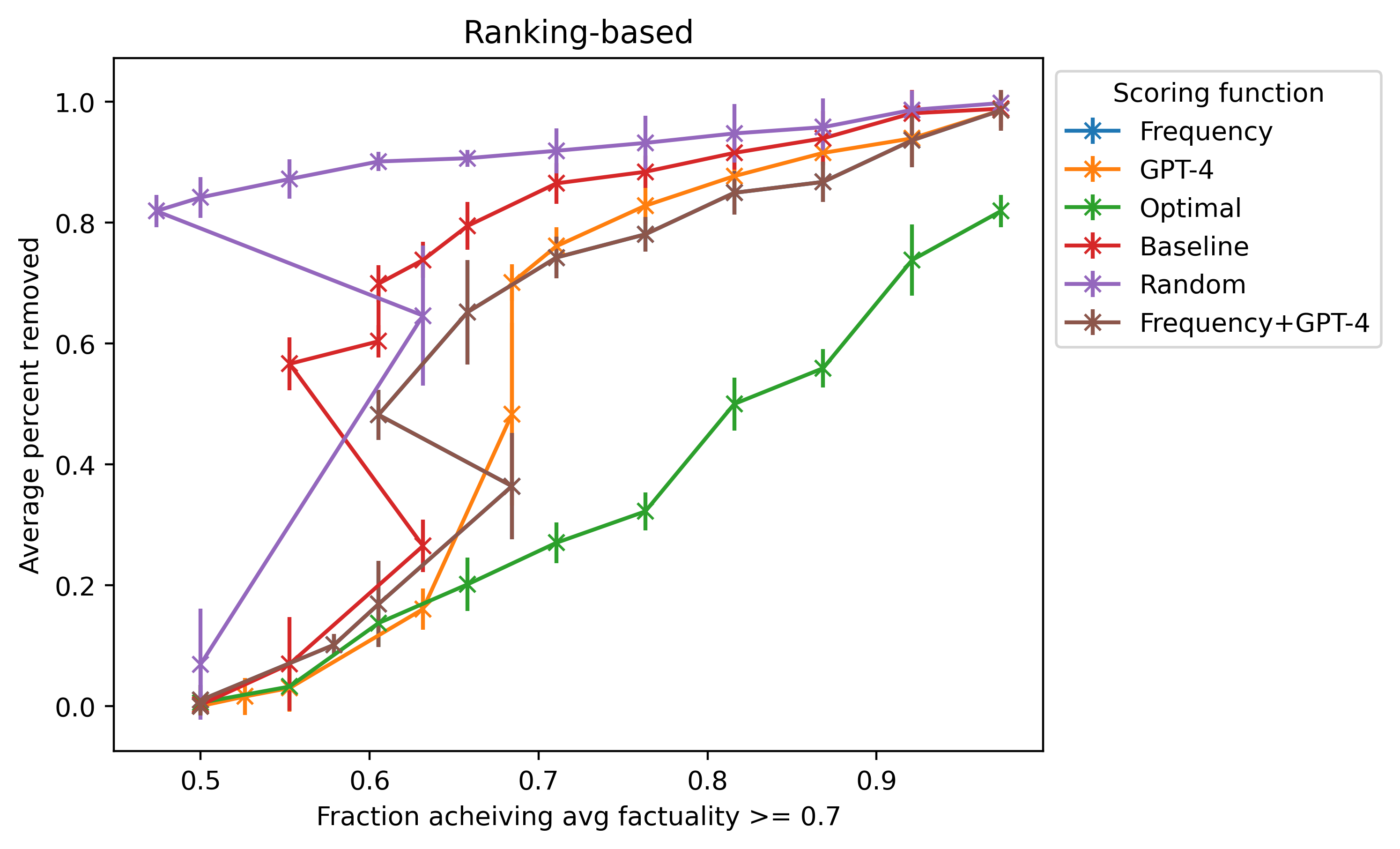}
  \end{subfigure}
  \caption{$\SL_t$ via sub-claims with $a=0.7$.}
  \label{fig:partial}
\end{figure}

}

\section{Prompts used in experiments}
\label{app:prompts}

We use prompts to implement both the sub-claim separator $\sfS$ and merger function $\sfM$ (defined in Section~\ref{sec:implementation}). The only other prompt we use is for frequency scoring. For convenience, we perform GPT-4 confidence scoring alongside sub-claim separation. All of these prompts appear in Table~\ref{tab:prompts}. We use the \texttt{`gpt-4'} endpoint, set \texttt{max\_tokens} to $1000$ and \texttt{temperature} to $0.0$. We used GPT-4 with these prompts between December $15$ and January $15$.

\begin{table*}[ht!]
  \centering
  \begin{tabular}{p{15cm} }
    \small{\textbf{Separator (for all datasets)/GPT-4 confidence scoring}} \\
    \scriptsize{\texttt{Please breakdown the following input into a set of small, independent claims (make sure not to add any information), and return the output as a jsonl, where each line is {subclaim:[CLAIM], gpt-score:[CONF]}.\ignore{\n} The confidence score [CONF] should represent your confidence in the claim, where a 1 is obvious facts and results like `The earth is round' and `1+1=2'. A 0 is for claims that are very obscure or difficult for anyone to know, like the birthdays of non-notable people. If the input is short, it is fine to only return 1 claim. The input is: }}  \\
    \small{\textbf{Merger (for FActScore)}} \\
    \scriptsize{\texttt{You will get an instruction and a set of facts that are true. Construct an answer using ONLY the facts provided, and try to use all facts as long as its possible. If no facts are given, reply to the instruction incorporating the fact that you dont know enough to fully respond. \textcolor{gray}{\textbackslash n\textbackslash n}The facts:\textcolor{gray}{\textbackslash n} \{claim\_string\}\textcolor{gray}{\textbackslash n\textbackslash n}The instruction:\textcolor{gray}{\textbackslash n}\{prompt\}}} \\
    \small{\textbf{Merger (for NQ)}} \\
    \scriptsize{\texttt{You will get a natural question and parts of an answer, which you are to merge into coherent prose. Make sure to include all the parts in the answer. There may be parts that are seemingly unrelated to the others, but DO NOT add additional information or reasoning to merge them. \textcolor{gray}{\textbackslash n\textbackslash n}The parts:\textcolor{gray}{\textbackslash n}\{claim\_string\}\textcolor{gray}{\textbackslash n\textbackslash n}The question:\textcolor{gray}{\textbackslash n}\{prompt\}. Remember, DO NOT add any additional information or commentary, just combine the parts.}} \\
    \small{\textbf{Merger (for MATH)}} \\
    \scriptsize{\texttt{"You will get a math problem and a set of steps that are true. Construct an answer using ONLY the steps provided. Make sure to include all the steps in the answer, and do not add any additional steps or reasoning. These steps may not fully solve the problem, but merging them could assist someone in solving the problem. \textcolor{gray}{\textbackslash n\textbackslash n}nThe steps:\textcolor{gray}{\textbackslash n}\{claim\_string\}\textcolor{gray}{\textbackslash n\textbackslash n}The math problem:\textcolor{gray}{\textbackslash n}\{prompt\}. Remember, do not do any additional reasoning, just combine the given steps.}}  \\
    \small{\textbf{Frequency scoring}} \\
    \scriptsize{\texttt{You will get a list of claims and piece of text. For each claim, score whether the text supports, contradicts, or is unrelated to the claim. Directly return a jsonl, where each line is \{"id":[CLAIM\_ID], "score":[SCORE]\}. Directly return the jsonl with no explanation or other formatting. For the [SCORE], return 1 for supports, -1 for contradicts, and 0 for unrelated. The claims are:\textcolor{gray}{\textbackslash n}\{claim\_string\}\textcolor{gray}{\textbackslash n\textbackslash n}The text is:\textcolor{gray}{\textbackslash n}\{output\}
}}  \\
\\
  \end{tabular}
  \caption{Prompts for sub-claim separator $\sfS$ and merger function $\sfM$, as well as LM-based sub-claim scoring functions (GPT-4 confidence and frequency). The prompt for frequency scoring is used to evaluate 5 alternate output sequences generated by GPT-4 with temperature $1.0$ .} 
  \label{tab:prompts}
\end{table*}

\section{Partial factuality}
\label{app:partial-factuality}

Here we give experimental results for the partial entailment/factuality setting of Section~\ref{subsec:partial_entailment}, which relaxes the standard formulation with an acceptable fraction of entailed sub-claims $a \in [0, 1]$. We choose $a=0.7$ for FActScore, $a=0.9$ for NQ, and $a=0.8$ for MATH, and plot factuality vs. percent of sub-claims removed in Figure~\ref{fig:partial_factuality_vs_avg_removed}. While the condition to be considered factual is relaxed, we observe a lower percentage of sub-claims removed.

\begin{figure*}[!h]
  \centering
  {\includegraphics[width=0.28\textwidth]{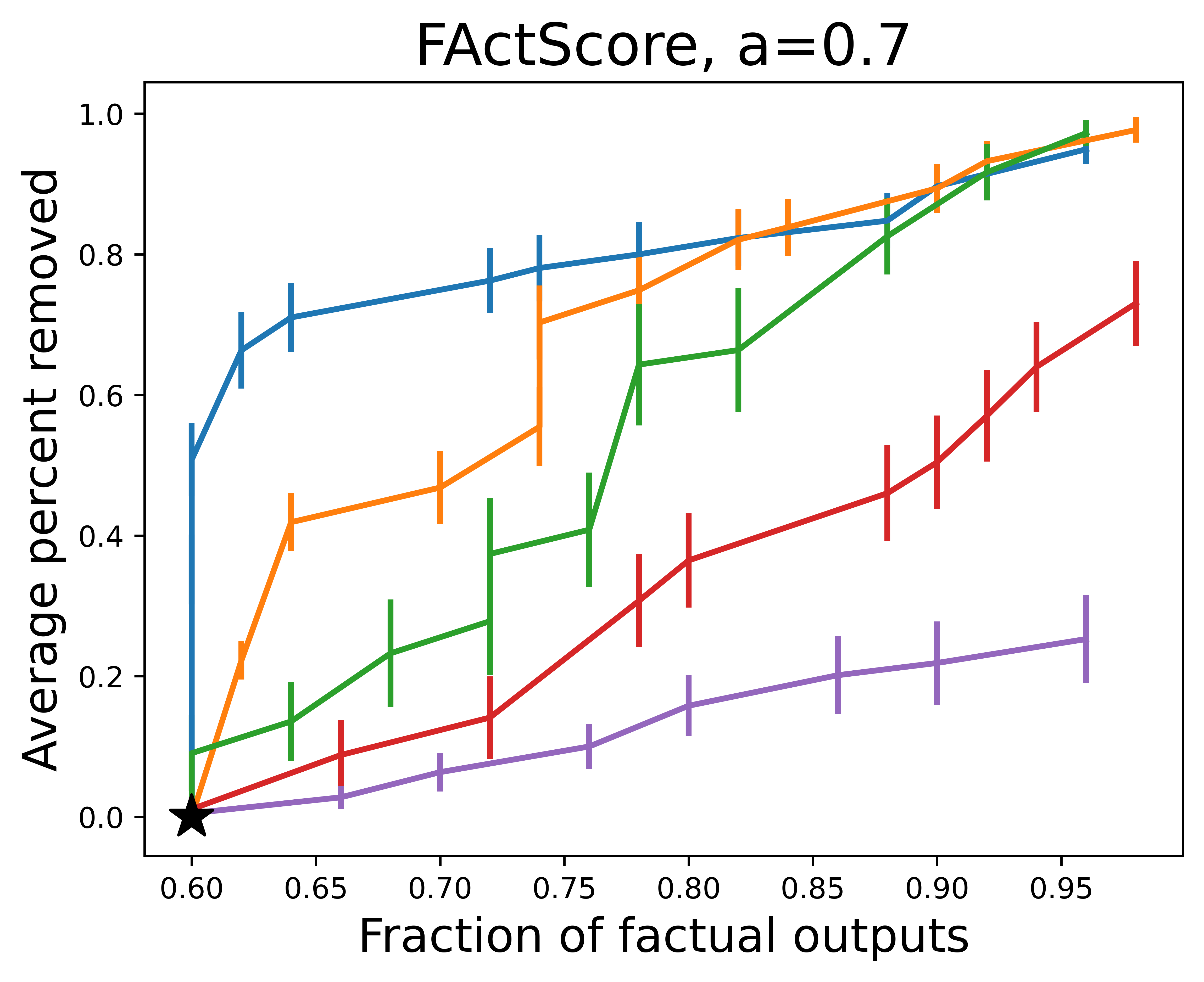}}
  {\includegraphics[width=0.28\textwidth]{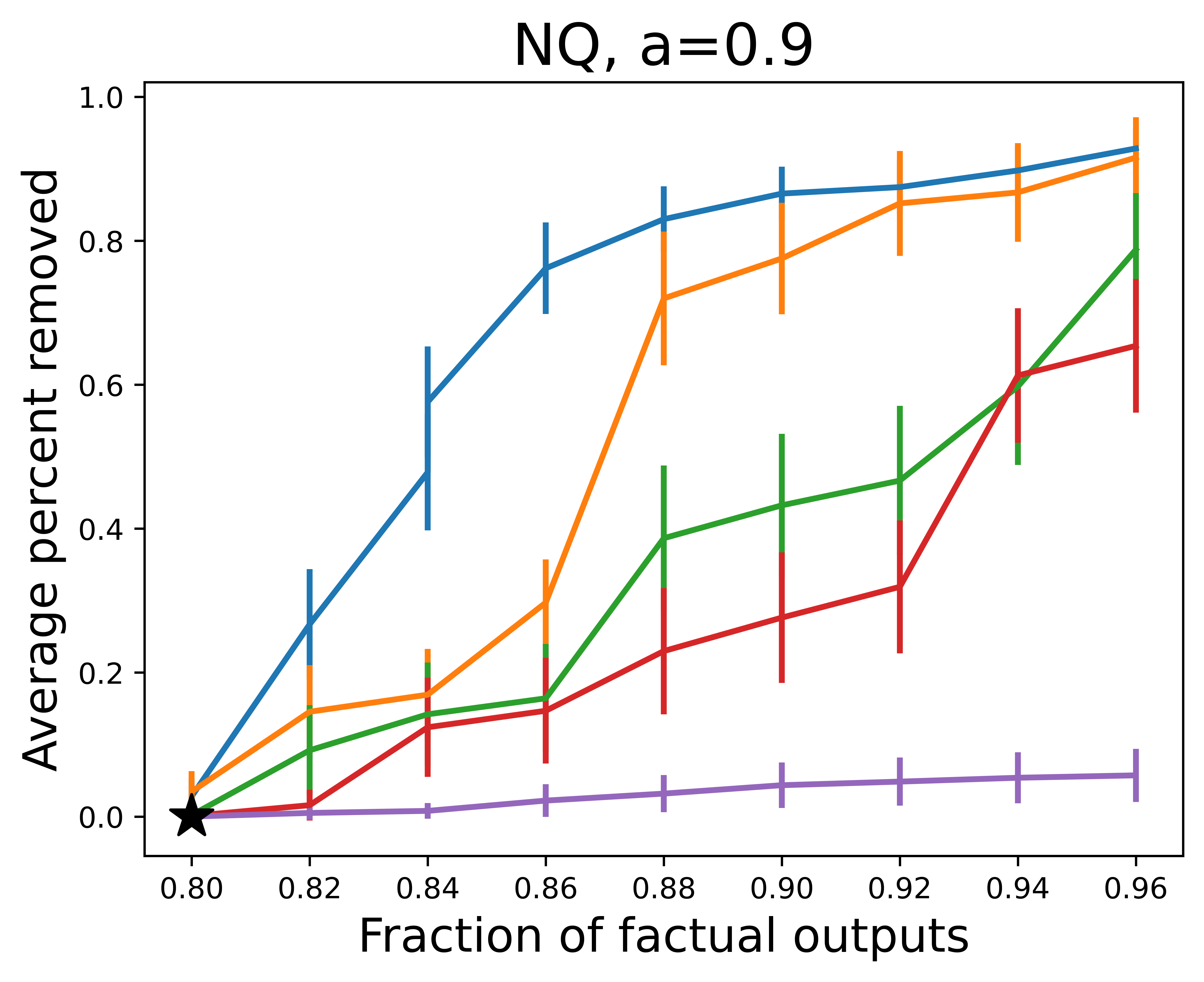}}
  {\includegraphics[width=0.4\textwidth]{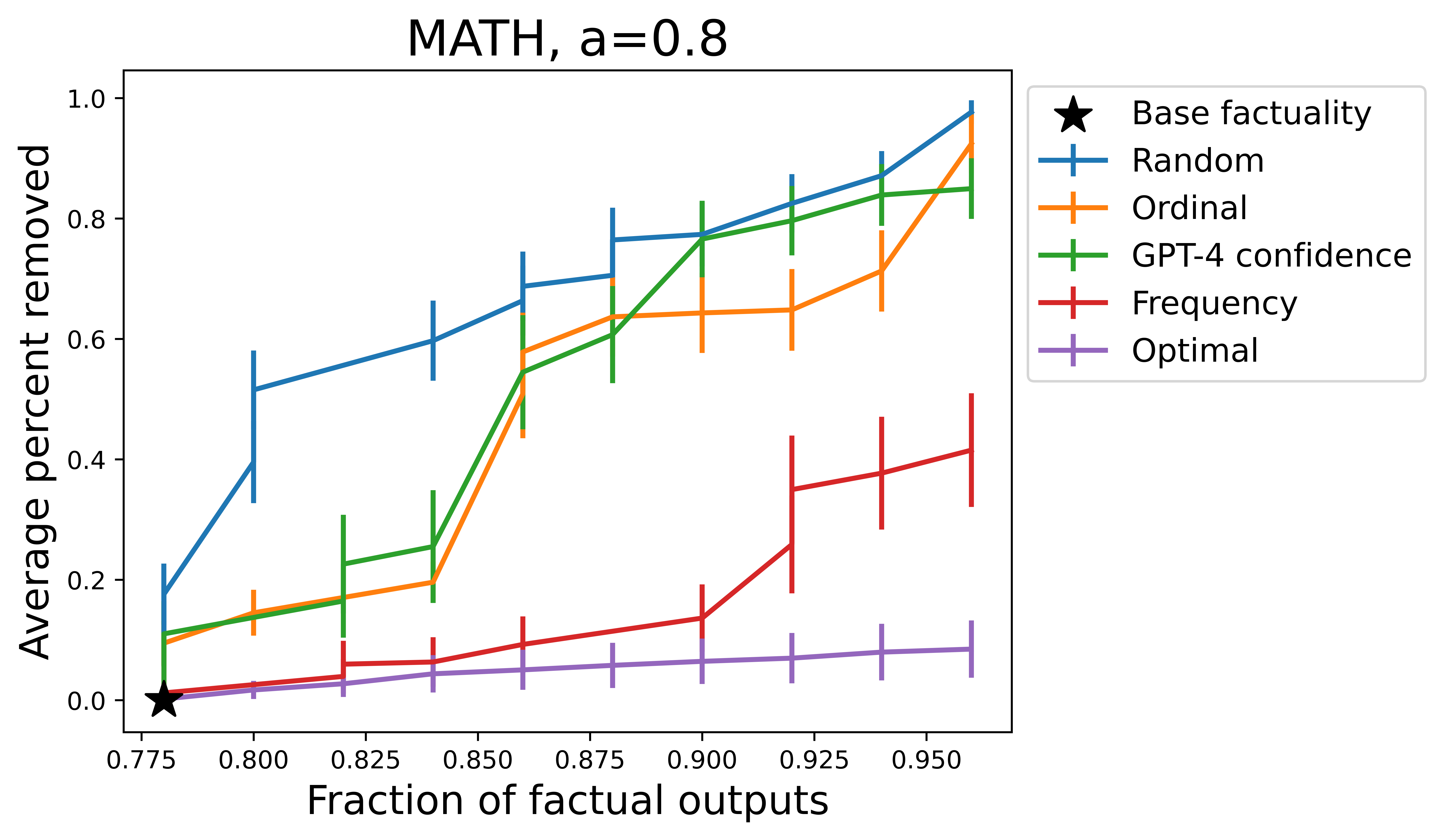}}
  \caption{Factuality vs. percent of sub-claims removed across all datasets for partial factuality setting. The tick marks correspond to different values of target $\alpha$, and the standard deviations represent standard error.}
  \label{fig:partial_factuality_vs_avg_removed}
\end{figure*}

\section{Ranking-based scoring functions}
\label{app:ranking}

Any sub-claim scoring function can be transformed into a \emph{ranking-based} scoring function. These aim to remove a fixed percentage of facts, which may be desirable in cases where modifying the outputs in roughly the same way is desirable. Here, for a sub-claim scoring function $s\colon 2^\sY \times \sY \mapsto \Rset$, we define a ranking function $f_s \colon 2^\sY \times \sY \mapsto \Nset$, where $f(\{c_i\}_{i=1}^n, c_j)$ returns the rank of $s(\{c_i\}_{i=1}^n, c_j)$ among the set $\{s(\{c_i\}_{i=1}^n, c_k)\}_{k=1}^n$ in increasing order. Then, for each of sub-claim scoring functions $s$ defined above, we apply the score:  
\begin{align*}
     \frac{f_s(\{c_i\}_{i=1}^n, c_j)}{n}.
\end{align*}
We divide by $n$ to account for varying-length output sequences. In Figure~\ref{fig:ranking_factuality_vs_avg_removed}, we plot factuality vs. percent of sub-claims removed for the ranking-based versions of all our sub-claim scoring functions.  These do remove a higher percent of sub-claims, but the standard deviations are smaller.

\begin{figure*}[!ht]
  \centering
  {\includegraphics[width=0.28\textwidth]{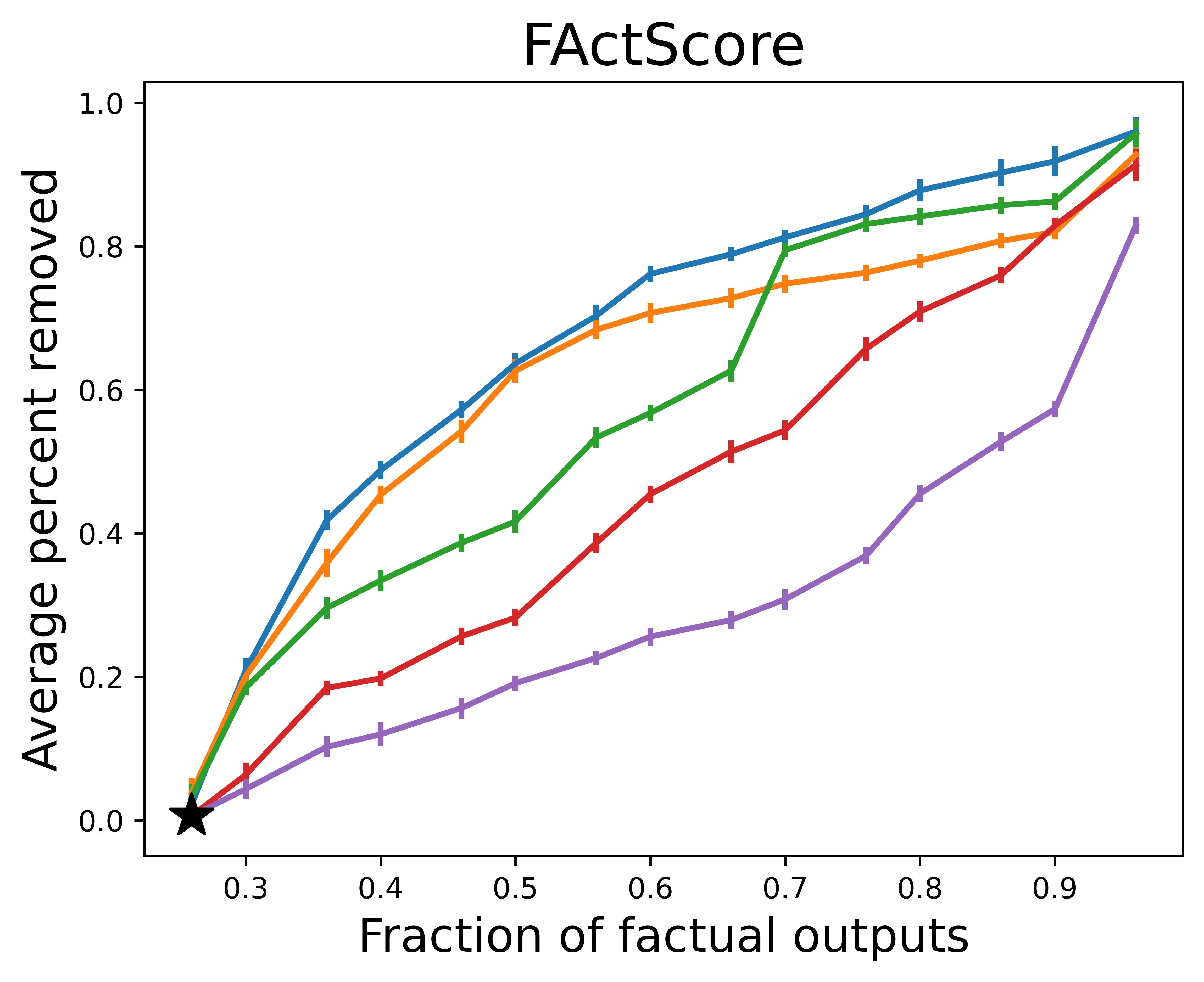}}
  {\includegraphics[width=0.28\textwidth]{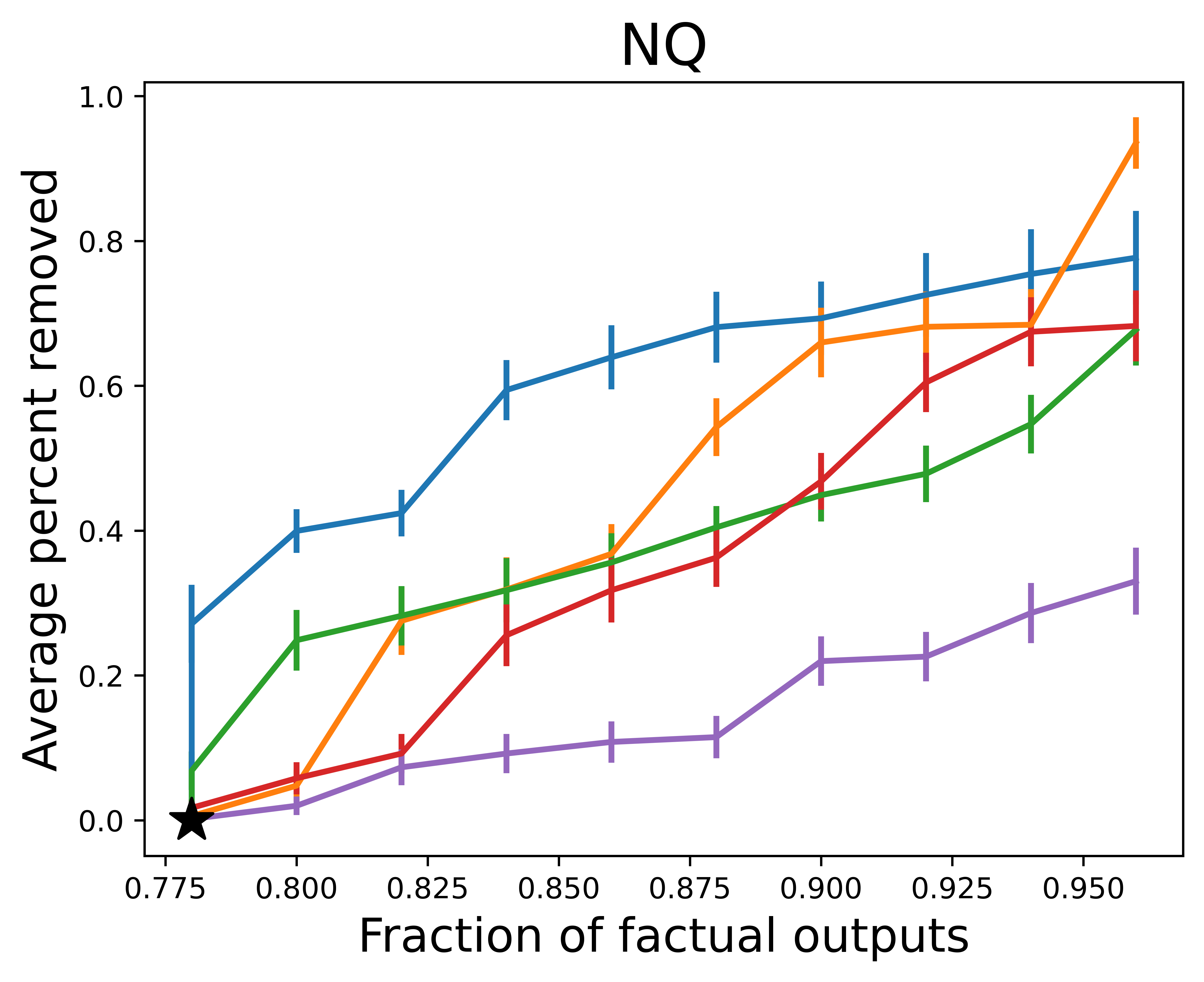}}
  {\includegraphics[width=0.4\textwidth]{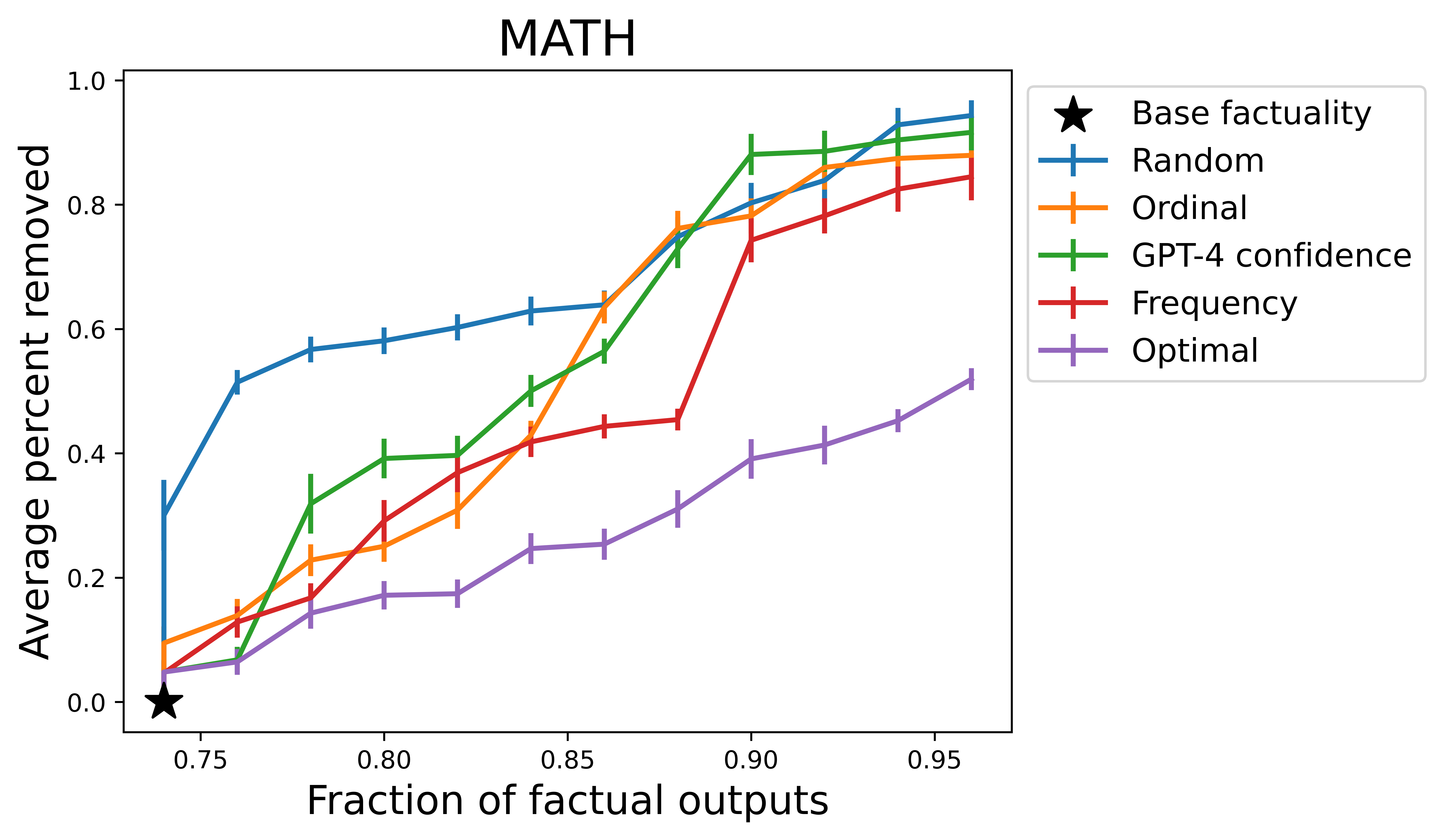}}
  \caption{Factuality vs. percent of sub-claims removed across all datasets with \emph{ranking-based} scoring functions. The tick marks correspond to different values of target $\alpha$, and the standard deviations represent standard error.}
  \label{fig:ranking_factuality_vs_avg_removed}
\end{figure*}

\section{More conformal factuality output examples}
\label{app:more-io}

Here we give more conformally factual output examples, supplementing those in Table~\ref{tab:io_examples}. We give examples for FActScore in Table~\ref{tab:extra_factscore}, NQ in Table~\ref{tab:extra_io_nq}, and MATH in Table~\ref{tab:extra_math}. 

\begin{table*}[ht!]
  \centering
  \begin{tabular}{p{9cm} p{7cm}}
    \small{\textbf{Original}} & \small{\textbf{Conformal Factuality}}  \\
    \scriptsize{\texttt{\hl{Luis Guillermo Rivera is a professional athlete from Colombia who specializes in high jump. He was born on March 20, 1989 in Apartado, Colombia. Rivera has represented Colombia in several international competitions, including the World Championships in Athletics and the Pan American Games. He won the gold medal in the high jump at the 2013 South American Championships in Athletics. His personal best in the high jump is 2.31 meters, which he achieved in 2013.}
}}   & \scriptsize{\texttt{I'm uncertain about the specific details, and I recommend using a search engine like Google for more accurate and up-to-date information.}} \\
    \scriptsize{\texttt{Roxana Diaz is a \hl{Venezuelan} athlete \hl{who specializes in the 400 meters hurdles. She represented her country in various international competitions, including the World Championships and the Pan American Games.} Diaz is known for her speed, agility, and determination on the track. \hl{She has won several medals throughout her career, making her one of the most successful athletes in her country.} }}   & \scriptsize{\texttt{Roxana Diaz is an athlete \hl{who has represented Venezuela in various international competitions.} She has also participated in the World Championships.
}} \\
    \scriptsize{\texttt{Sharad Kumar is an Indian para-athlete who competes in high jump events. He was born on March 1, 1992, in Bihar, India. \hl{Kumar was diagnosed with polio at a young age, which affected his left leg.} Despite this, he pursued his passion for sports and began training as a high jumper.\textcolor{gray}{\textbackslash n\textbackslash n}Kumar has represented India in several international competitions and has won numerous medals. \hl{He won a gold medal at the 2014 Asian Para Games in Incheon, South Korea, and a silver medal at the 2018 Asian Para Games in Jakarta, Indonesia. He also won a bronze medal at the 2017 World Para Athletics Championships in London.}\textbackslash nnIn 2021, Kumar won a bronze medal in the men's high jump T42 category at the Tokyo Paralympics, marking his first Paralympic medal. \hl{His personal best in high jump is 1.90 meters, which he achieved at the 2017 World Para Athletics Championships.}}}   & \scriptsize{\texttt{Sharad Kumar is an Indian para-athlete who was born on March 1, 1992, in Bihar, India. He competes in high jump events and despite his condition, he pursued his passion for sports and began training as a high jumper. Sharad Kumar has represented India in several international competitions and has won numerous medals in his career. Most notably, in 2021, he won a bronze medal in the men's high jump T42 category at the Tokyo Paralympics.
}} \\
    \scriptsize{\texttt{Katsunosuke Hori is a Japanese actor and voice actor. \hl{He was born on November 28, 1941} in Tokyo, Japan. He is best known for his work in anime and video games, \hl{including roles in "Neon Genesis Evangelion", "Ghost in the Shell: Stand Alone Complex", and "Final Fantasy VII: Advent Children".} He is also known for his dubbing roles in foreign films, \hl{such as the voice of Gandalf in the Japanese version of "The Lord of the Rings" trilogy.}}}   & \scriptsize{\texttt{Katsunosuke Hori is a Japanese actor and voice actor. He is known for his work in anime and video games.}} \\
    \scriptsize{\texttt{Ra Jong-yil is a South Korean diplomat and academic. He served as the South Korean ambassador to the United Kingdom and Japan. He also served as the deputy director of the National Intelligence Service. As an academic, he has taught at various universities and written several books on North Korea and international relations.}}   & \scriptsize{\texttt{Ra Jong-yil is a South Korean diplomat and academic who has served as the South Korean ambassador to both the United Kingdom and Japan. In addition to his diplomatic work, he has written several books on North Korea and international relations.
}} \\
    \scriptsize{\texttt{Luis N. Rivera-Pag\u00e1n is a Puerto Rican theologian and author. \hl{He was the first Latino to be tenured as a professor at Princeton Theological Seminary,} where he taught for many years. \hl{His work focuses on the intersection of religion, colonialism, and social justice, particularly in the context of Latin America and the Caribbean.} He has written several books, including "A Violent Evangelism: The Political and Religious Conquest of the Americas" and "Essays from the Margins".
}}   & \scriptsize{\texttt{Luis N. Rivera-Pag\u00e1n is a Puerto Rican theologian and author who has written several books. His work primarily focuses on the intersection of religion, colonialism, and social justice. He has spent many years teaching at Princeton Theological Seminary."
}} \\
    \scriptsize{\texttt{Jesse Foppert is a former professional baseball player from the United States. He was born on July 10, 1980 in Reading, Pennsylvania. Foppert was a pitcher and played in Major League Baseball (MLB) for the San Francisco Giants and the Seattle Mariners. He was considered one of the top pitching prospects in baseball in 2003, but his career was derailed by injuries. After retiring from professional baseball, Foppert became a coach.}}   & \scriptsize{\texttt{Jesse Foppert is a former professional baseball player from the United States, born on July 10, 1980 in Reading, Pennsylvania. He was a pitcher in Major League Baseball (MLB), playing for the San Francisco Giants and the Seattle Mariners. Unfortunately, his career was derailed by injuries. After retiring from professional baseball, Foppert became a coach.
}} \\
    \scriptsize{\texttt{Zelma Wilson (\hl{1926}-1996) was an American architect \hl{known for her work in the field of sustainable design. She was one of the first women to establish her own architectural firm in California and was a pioneer in the use of solar energy in architecture. Wilson's designs often incorporated natural materials and were designed to blend in with their surroundings.} Some of her most notable projects include the Ojai City Hall and the \hl{Ventura County Government Center.} She was also a dedicated advocate for women in architecture and served as a mentor to many young architects.
}}   & \scriptsize{\texttt{Zelma Wilson was an American architect.}} \\
  \end{tabular}
  \caption{FActScore with $\alpha=0.2$ and frequency scoring: comparison of original output by GPT-4 versus our modified output from conformal factuality. Incorrect claims are highlighted in \hl{red}.  }
  \label{tab:extra_factscore}
\end{table*}

\begin{table*}[ht!]
  \centering
  \begin{tabular}{p{3cm} p{6cm} p{6cm}}
     \small{\textbf{Question}}& \small{\textbf{Original}} & \small{\textbf{Conformal Factuality}}  \\
\scriptsize{\texttt{when is the last episode of season 8 of the walking dead
}} & \scriptsize{\texttt{The last episode of season 8 of The Walking Dead aired on April 15, 2018.}} & \scriptsize{\texttt{The Walking Dead, which has at least 8 seasons, aired its last episode of season 8 on April 15, 2018.}} \\ 
\scriptsize{\texttt{in greek mythology who was the goddess of spring growth
}} & \scriptsize{\texttt{The goddess of spring growth in Greek mythology was Persephone. She was also the queen of the underworld as she was abducted by Hades, the god of the underworld. Her return to the surface each year was believed to symbolize the rebirth of plants and the arrival of spring.}} & \scriptsize{\texttt{In Greek mythology, the goddess of spring growth was Persephone. Interestingly, Persephone also held the title of the queen of the underworld.}} \\ 
\scriptsize{\texttt{until we meet again filipino soap episode 10
}} & \scriptsize{\texttt{\hl{"Until We Meet Again" is a popular Filipino soap opera.} However, I'm sorry, but I can't provide a summary for episode 10 as the details for specific episodes of this show are not readily available online. It's best to watch the series through a legitimate streaming service or on the network it originally aired on.}} & \scriptsize{\texttt{I'm uncertain about the specific details, and I recommend using a search engine like Google for more accurate and up-to-date information.}} \\ \scriptsize{\texttt{what is the name of spain's most famous soccer team
}} & \scriptsize{\texttt{Real Madrid and FC Barcelona are the most famous soccer teams in Spain.}} & \scriptsize{\texttt{Real Madrid and FC Barcelona are both soccer teams in Spain and are considered to be two of the most famous soccer teams in the country.}} \\ 
\scriptsize{\texttt{who are uncle owen and aunt beru related to
}} & \scriptsize{\texttt{Uncle Owen and Aunt Beru are related to Luke Skywalker. They are his uncle and aunt, who raised him on the planet Tatooine in the Star Wars series. Uncle Owen is also the stepbrother of Anakin Skywalker, who later becomes Darth Vader.}} & \scriptsize{\texttt{Uncle Owen and Aunt Beru are related to Luke Skywalker, as they are his uncle and aunt. They are characters in the Star Wars series, along with Luke. Uncle Owen is also the stepbrother of Anakin Skywalker, who later becomes Darth Vader in the series. Uncle Owen and Aunt Beru raised Luke Skywalker.}} \\ 
\scriptsize{\texttt{where was donovan mitchell picked in the draft
}} & \scriptsize{\texttt{Donovan Mitchell was picked 13th overall in the 2017 NBA Draft.}} & \scriptsize{\texttt{Donovan Mitchell, who is a person, was picked 13th overall in the 2017 NBA Draft.}} \\ 
\scriptsize{\texttt{where did the beatles final live performance take place
}} & \scriptsize{\texttt{The Beatles' final live performance took place on the rooftop of the Apple Corps building in London, England on January 30, 1969.}} & \scriptsize{\texttt{The Beatles' final live performance took place on January 30, 1969, on a rooftop. This rooftop was of the Apple Corps building, which is located in London, England.}} \\ 
\scriptsize{\texttt{when does maggie come on grey's anatomy
}} & \scriptsize{\texttt{Maggie Pierce, played by Kelly McCreary, \hl{first appears in Grey's Anatomy in the last episode of Season 10}, which aired on May 15, 2014. \hl{However, her character is formally introduced in the first episode of Season 11,} which aired on September 25, 2014.}} & \scriptsize{\texttt{Maggie Pierce, a character in Grey's Anatomy, \hl{first appears in the series on the last episode of Season 10.} This episode aired on May 15, 2014.}} \\
\scriptsize{\texttt{who are the actors in fast and furious
}} & \scriptsize{\texttt{1. Vin Diesel as Dominic Toretto\textcolor{gray}{\textbackslash n}
2. Paul Walker as Brian O'Conner\textcolor{gray}{\textbackslash n}
3. Michelle Rodriguez as Letty Ortiz\textcolor{gray}{\textbackslash n}
4. Jordana Brewster as Mia Toretto\textcolor{gray}{\textbackslash n}
5. Tyrese Gibson as Roman Pearce\textcolor{gray}{\textbackslash n}
6. Ludacris as Tej Parker\textcolor{gray}{\textbackslash n}
7. Dwayne Johnson as Luke Hobbs\textcolor{gray}{\textbackslash n}
8. Jason Statham as Deckard Shaw\textcolor{gray}{\textbackslash n}
9. Sung Kang as Han Lue\textcolor{gray}{\textbackslash n}
10. Gal Gadot as Gisele Yashar\textcolor{gray}{\textbackslash n}
11. Elsa Pataky as Elena Neves\textcolor{gray}{\textbackslash n}
12. Nathalie Emmanuel as Ramsey\textcolor{gray}{\textbackslash n}
13. John Cena as Jakob Toretto\textcolor{gray}{\textbackslash n}
14. Charlize Theron as Cipher\textcolor{gray}{\textbackslash n}
15. Lucas Black as Sean Boswell\textcolor{gray}{\textbackslash n}
16. Bow Wow as Twinkie (Fast and Furious: Tokyo Drift)\textcolor{gray}{\textbackslash n}
17. Eva Mendes as Monica Fuentes (2 Fast 2 Furious and Fast Five)\textcolor{gray}{\textbackslash n}
18. Kurt Russell as Mr. Nobody\textcolor{gray}{\textbackslash n}
19. Helen Mirren as Magdalene Shaw\textcolor{gray}{\textbackslash n}
20. Scott Eastwood as Little Nobody (The Fate of the Furious\hl{ and F9}) \textcolor{gray}{\textbackslash n}
Please note that this list includes main characters from the entire Fast \& Furious franchise, not just the first movie.}} & \scriptsize{\texttt{The Fast \& Furious franchise features a star-studded cast. Vin Diesel plays the character Dominic Toretto, while Paul Walker is known for his role as Brian O'Conner. Michelle Rodriguez portrays Letty Ortiz and Jordana Brewster plays Mia Toretto. Tyrese Gibson takes on the role of Roman Pearce, and Ludacris is cast as Tej Parker. Dwayne Johnson is recognized as Luke Hobbs, and Jason Statham plays Deckard Shaw. Sung Kang is cast as Han Lue, and Gal Gadot plays Gisele Yashar. Elsa Pataky portrays Elena Neves, and Nathalie Emmanuel is known for her role as Ramsey. John Cena plays Jakob Toretto, and Charlize Theron is cast as Cipher. This list includes main characters from the entire Fast \& Furious franchise, not just the first movie.}} \\
  \end{tabular}
  \caption{NQ with $\alpha=0.1$ and frequency scoring: comparison of original output by GPT-4 versus our modified output from conformal factuality. Incorrect claims are highlighted in \hl{red}.}
  \label{tab:extra_io_nq}
\end{table*}

\begin{table*}[ht!]
  \centering
  \begin{tabular}{p{3cm} p{6.5cm} p{6.5cm}}
     \small{\textbf{Question}}& \small{\textbf{Original}} & \small{\textbf{Conformal Factuality}}  \\
\scriptsize{\texttt{How many vertical asymptotes does the graph of $y=\frac{2}{x^2+x-6}$ have?}} & \scriptsize{\texttt{The graph of $y=\frac{2}{x^2+x-6}$ has vertical asymptotes where the denominator is equal to zero. The denominator $x^2+x-6$ factors as $(x-2)(x+3)$, so the vertical asymptotes are at $x=2$ and $x=-3$. Therefore, the graph has $\boxed{2}$ vertical asymptotes.}} & \scriptsize{\texttt{The graph of $y=\frac{2}{x^2+x-6}$ has vertical asymptotes where the denominator is equal to zero. The denominator $x^2+x-6$ factors as $(x-2)(x+3)$. Therefore, the vertical asymptotes are at $x=2$ and $x=-3$. So, the graph has 2 vertical asymptotes.}} \\ 
\scriptsize{\texttt{What is the positive difference between $120\%$ of 30 and $130\%$ of 20?}} & \scriptsize{\texttt{$120\%$ of 30 is $1.2(30)=36$ and $130\%$ of 20 is $1.3(20)=26$. The positive difference between these two numbers is $36-26=\boxed{10}$.}} & \scriptsize{\texttt{First, find $120\%$ of 30 by multiplying $1.2$ by $30$ to get $36$. Then, find $130\%$ of 20 by multiplying $1.3$ by $20$ to get $26$. Finally, find the positive difference between $36$ and $26$ which is $10$.}} \\ 
\scriptsize{\texttt{Find the sum of all integers that satisfy these conditions: \[
|x|+1>7\text{ and }|x+1|\le7.
\]}} & \scriptsize{\texttt{The first inequality, $|x|+1>7$, is equivalent to $|x|>6$, which means $x>6$ or $x<-6$. The second inequality, $|x+1|\le7$, is equivalent to $-7\le x+1\le7$, or $-8\le x\le6$. \hl{The integers that satisfy both inequalities are $-7$ and $7$,} so their sum is $\boxed{0}$.}} & \scriptsize{\texttt{The first inequality, $|x|+1>7$, is equivalent to $|x|>6$. This means $x>6$ or $x<-6$. The second inequality, $|x+1|\le7$, is equivalent to $-7\le x+1\le7$. This can be rewritten as $-8\le x\le6$. The sum of the integers $-7$ and $7$ is $0$.}} \\  
\scriptsize{\texttt{What is the number of units in the distance from the origin to the point $(-8,6)$ in a coordinate system?}} & \scriptsize{\texttt{The distance from the origin to a point $(x,y)$ in a coordinate system is given by the formula $\sqrt{x^2 + y^2}$. Substituting $x = -8$ and $y = 6$ gives $\sqrt{(-8)^2 + 6^2} = \sqrt{64 + 36} = \sqrt{100} = \boxed{10}$ units.}} & \scriptsize{\texttt{The distance from the origin to a point $(x,y)$ in a coordinate system is given by the formula $\sqrt{x^2 + y^2}$. If we substitute $x = -8$ and $y = 6$ into the formula, we get $\sqrt{(-8)^2 + 6^2}$. This calculation simplifies to $\sqrt{64 + 36}$, which further simplifies to $\sqrt{100}$. Therefore, the distance from the origin to the point $(-8,6)$ is $10$ units.}} \\ 
\scriptsize{\texttt{If $A$, $B$ and $C$ are positive integers such that $\frac{A\sqrt{B}}{C} = \frac{9}{2\sqrt{3}}$, what is the value of $A+B+C$ given that $A$ and $C$ have no common prime factors, and $B$ has no perfect-square factors other than 1?}} & \scriptsize{\texttt{We can simplify the right side of the equation to get $\frac{A\sqrt{B}}{C} = \frac{9}{2\sqrt{3}} = \frac{9\sqrt{3}}{6} = \frac{3\sqrt{3}}{2}$.  Setting this equal to $\frac{A\sqrt{B}}{C}$, we get $A = 3$, $B = 3$, and $C = 2$, so $A + B + C = \boxed{8}$.}} & \scriptsize{\texttt{First, the right side of the equation is simplified to get $\frac{A\sqrt{B}}{C} = \frac{9}{2\sqrt{3}} = \frac{9\sqrt{3}}{6} = \frac{3\sqrt{3}}{2}$. Setting this equal to $\frac{A\sqrt{B}}{C}$, we find that $A = 3$, $B = 3$, and $C = 2$. Therefore, the sum of A, B, and C is 8.}} \\ 
\scriptsize{\texttt{Ten treeks weigh as much as three squigs and one goolee. Two treeks and one goolee are equal in weight to one squig. The combined weight of how many treeks equals the weight of one squig?}} & \scriptsize{\texttt{From the first statement, we can write the equation 10T = 3S + G (where T represents the weight of a treek, S represents the weight of a squig, and G represents the weight of a goolee). From the second statement, we can write the equation 2T + G = S. Substituting the second equation into the first, we get 10T = 3(2T + G) + G, \hl{which simplifies to 10T = 6T + 3G.} Subtracting 6T from both sides, we get 4T = 3G. Substituting this into the second equation, we get 2T + 4T/3 = S, which simplifies to 10T/3 = S.
\hl{Therefore, the combined weight of 10/3, or approximately 3.33 treeks, equals the weight of one squig}.}} & \scriptsize{\texttt{First, we can represent the weight of a treek as T, the weight of a squig as S, and the weight of a goolee as G. From the first statement, we can derive the equation 10T = 3S + G. From the second statement, we can derive the equation 2T + G = S. We can then substitute the second equation into the first, resulting in the equation 10T = 3(2T + G) + G. This equation represents the relationship between the weights of treeks, squigs, and goolees.}} \\
\scriptsize{\texttt{Evaluate the expression $a^3\cdot a^2$ if $a= 5$.}} & \scriptsize{\texttt{We have $a^3\cdot a^2 = a^{3+2} = a^5$.  Therefore, if $a=5$, then $a^3\cdot a^2 = 5^5 = \boxed{3125}$.}} & \scriptsize{\texttt{First, we know that $a^3\cdot a^2$ is equivalent to $a^{3+2}$. Then, $a^{3+2}$ is equivalent to $a^5$. Given that $a=5$, we can substitute this value into the expression, so $a^3\cdot a^2$ is equivalent to $5^5$. Finally, we know that $5^5$ equals 3125. Therefore, if $a=5$, the expression $a^3\cdot a^2$ evaluates to 3125.}} \\ 
\scriptsize{\texttt{Let $f(y) = y^4 -3y^3 +y - 3$ and $g(y) = y^3 + 7y^2 -2$. Find $f(y) + g(y)$.  Write your answer as a polynomial with terms of decreasing degree.}} & \scriptsize{\texttt{We have $f(y) + g(y) = (y^4 -3y^3 +y - 3) + (y^3 + 7y^2 -2) = \boxed{y^4 -2y^3 + 7y^2 + y - 5}$.}} & \scriptsize{\texttt{The function f(y) is represented by the equation $y^4 -3y^3 +y - 3$ and the function g(y) is represented by the equation $y^3 + 7y^2 -2$. The sum of these two functions, f(y) + g(y), equals $y^4 -2y^3 + 7y^2 + y - 5$. Therefore, f(y) + g(y) = $y^4 -2y^3 + 7y^2 + y - 5$.}} \\ 
\end{tabular}
  \caption{MATH with $\alpha=0.1$ and frequency scoring: comparison of original output by GPT-4 versus our modified output from conformal factuality. Incorrect claims are highlighted in \hl{red}. }
  \label{tab:extra_math}
\end{table*}

\section{Empirical factuality for all scoring functions}
\label{app:calib}

To complete the study of empirical factuality in Section~\ref{subsec:empirical_factuality}, we repeat the same procedure for the remaining sub-claim scoring functions. We present the results in Figure~\ref{fig:calib}.

\begin{figure*}[b]
  \centering
  {\includegraphics[width=0.45\textwidth]{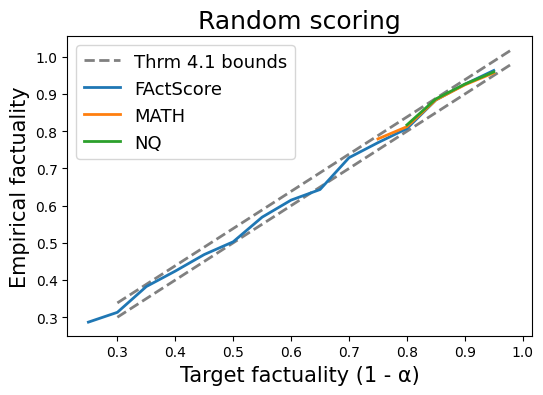}}
  {\includegraphics[width=0.45\textwidth]{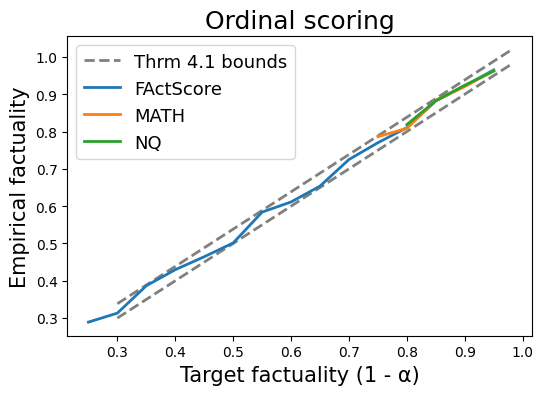}}
  {\includegraphics[width=0.45\textwidth]{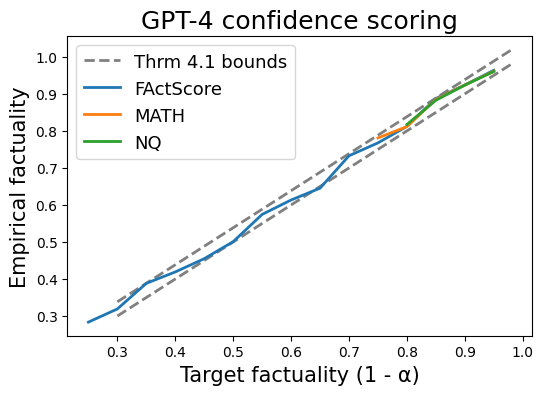}}
  {\includegraphics[width=0.45\textwidth]{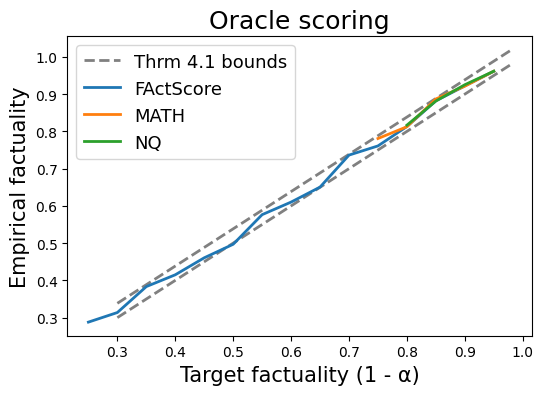}}
  \caption{Empirical factuality plots for all scoring functions. }
  \label{fig:remaining-calib-plots}
\end{figure*}

\end{document}